\documentclass[runningheads]{llncs}
\usepackage[T1]{fontenc}
\usepackage{graphicx}
\usepackage{amssymb}
\usepackage{amsmath}

\usepackage{tikz}
\usetikzlibrary{automata, positioning, arrows, fit, cd}

\usepackage{comment}

\usepackage{pgfplots}
\pgfplotsset{compat=1.18}
\usepgfplotslibrary{fillbetween}

\usepackage{subcaption}
\usepackage{wrapfig}
\usepackage{thm-restate}
\usepackage{booktabs}
\usepackage{multirow}
\usepackage{makecell}

\newcommand{\act}{\mathit{Act}}
\newcommand{\initdist}{\mu}

\makeatletter
\newsavebox{\@brx}
\newcommand{\llangle}[1][]{\savebox{\@brx}{\(\m@th{#1\langle}\)}%
  \mathopen{\copy\@brx\mkern2mu\kern-0.9\wd\@brx\usebox{\@brx}}}
\newcommand{\rrangle}[1][]{\savebox{\@brx}{\(\m@th{#1\rangle}\)}%
  \mathclose{\copy\@brx\mkern2mu\kern-0.9\wd\@brx\usebox{\@brx}}}
\makeatother

\newcommand{\drnote}[1]{
}

\usepackage{color}
\usepackage[hypertexnames=false]{hyperref}

\usepackage{cleveref}
\usepackage{orcidlink} 
\begin{document}
%
\title{Value Functions as Supermartingale Certificates}
\titlerunning{Value Functions as Supermartingale Certificates}
%
\author{
Alessandro Abate\inst{1}\orcidlink{0000-0002-5627-9093}
\and
Daniel Contro\inst{2}\orcidlink{0009-0004-8222-2797}
\and
Mirco Giacobbe\inst{2}\orcidlink{0000-0001-8180-0904}
\and
Agustín Martínez-Suñé\inst{1}\orcidlink{0000-0003-1806-6932}
\and
Diptarko Roy\inst{2}\orcidlink{0009-0003-4306-2076}
}
\authorrunning{A. Abate et al.}
%
\institute{
University of Oxford, UK
\and
University of Birmingham, UK
}
\maketitle              
\begin{abstract}

Certification methods for stochastic systems provide sufficient proof rules, based on real-valued supermartingale certificates, to determine the almost-sure satisfaction of $\omega$-regular properties (and therefore of linear temporal logic) over general state spaces, encompassing both countably infinite and continuous state spaces. 
Conversely, reinforcement learning (RL) methods for $\omega$-regular tasks have received considerable attention, but they typically lack formal guarantees that the learned policy satisfies the specification, except possibly for finite state and action spaces.
We bridge these two lines of research by establishing a novel theoretical connection: under an appropriate reward, the value function associated to a policy that almost surely satisfies an $\omega$-regular property encodes a Streett supermartingale certificate for that specification.
%
Our results, validated experimentally on finite Markov decision processes, hold for finite, countably infinite, and continuous state spaces, suggesting a principled route to certificate synthesis via RL.
%
\end{abstract}
%
%
%

\section{Introduction}

Proof certificates have been widely studied as a means of establishing that a system satisfies a specification with mathematical guarantees. More precisely, in the context of infinite-state systems with stochastic behaviour, the verification problem reduces to computing supermartingale certificates.
Proof rules for supermartingale certificates have been developed for verifying persistence properties, recurrence properties \cite{chakarovDeductiveProofsAlmost2016}, and, more recently, $\omega$-regular properties \cite{abateStochasticOmegaRegularVerification2024,DBLP:conf/cav/AbateGR25,henzingerSupermartingaleCertificatesQuantitative2025}, which include those expressible in linear temporal logic~(LTL).
At its core, the problem of supermartingale certificate synthesis amounts to finding a function $W : S \to \mathbb{R}$ that maps the system's state space to the real numbers and satisfies a set of proof rules determined by the system dynamics and the property under verification.
We focus on Streett supermartingales~\cite{abateStochasticOmegaRegularVerification2024},
which certify $\omega$-regular properties---subsuming LTL and, in particular, reachability and safety.
A key challenge is that existing synthesis methods 
rely on constrained optimisation, which does not scale well to large or continuous state spaces.

Reinforcement learning (RL) is a standard machine learning paradigm for solving sequential decision-making problems in environments with possibly stochastic dynamics. The environment is modeled as a Markov decision process (MDP), where the goal is to select actions that maximize the expected discounted return under a given reward function. A key component of RL is the estimation of the value function $V : S \to \mathbb{R}$ of a given policy, which maps states to the expected discounted return obtained by following such policy.
An increasing body of work proposes the use of model-free RL to synthesize controllers 
for tasks specified in LTL~\cite{hahnFaithfulEffectiveReward2020,hasanbeigLCRLCertifiedPolicy2022,sadighLearningBasedApproach2014}.  
These approaches typically synchronize the MDP state on the fly with an $\omega$-regular automaton that corresponds to the LTL specification, and assign rewards based on the automaton and its acceptance conditions.
Crucially, while these methods might provide guarantees that the optimal policy maximises the 
probability of satisfaction, they lack formal guarantees that a trained policy---which 
may not be optimal---satisfies the specification with any given probability, except possibly in 
finite state and action spaces.

This gap motivates our work: we study the connection between Streett supermartingales and value functions in the discounted return setting.

\begin{description}
    \item[\textbf{Theory}] We propose two reward designs and prove that, under either one, the value function of a policy that almost surely satisfies an LTL specification encodes a valid Streett supermartingale (Theorems~\ref{thm:v-streett-supermartingale-bscc} and~\ref{thm:v-streett-sm-spec-reward}). The first requires knowledge of the absorbing sets induced by the policy; the second requires only knowledge of the specification. Both results hold for finite, countably infinite, and continuous state spaces.
    
    \item[\textbf{Experiments}] We validate both reward designs on a stochastic grid-world MDP across $\omega$-regular specifications spanning the full Manna-Pnueli hierarchy~\cite{mannaHierarchyTemporalProperties1990}. 
    We compute our value functions by solving the Bellman system, and verify our certificates by checking the supermartingale proof rules; we confirm satisfaction probabilities independently via the PRISM model checker~\cite{kwiatkowskaPRISM40Verification2011}.
    The code to reproduce our experiments is publicly available.
    
\end{description}
Together, these results open a principled route to supermartingale certificate synthesis via RL, reducing the problem from constrained optimisation to policy evaluation followed by a certificate check, with the potential to scale to large and continuous state spaces via data-driven methods.

\section{$\omega$-Regular Verification with Supermartingales}
\label{sec:background}

\begin{definition}[Markov Decision Process]
Let $S$ be a state space and $\act$ an action space, each either 
finite or countable (equipped with the discrete $\sigma$-algebra) 
or a Borel subset of a Euclidean space (equipped with the Borel $\sigma$-algebra).
A (discrete-time) \emph{Markov decision process (MDP)} is a tuple
$\mathcal{M} = (S, \act, P, \initdist)$
where:
\begin{itemize}
    \item $P$ is a stochastic kernel on $S$ given $S \times \act$, i.e.,
    for each $(s,a) \in S \times \act$, the map $P(\cdot \mid s,a)$ is a 
    probability measure on $S$, and for each measurable $E \subseteq S$, 
    the map $(s,a) \mapsto P(E \mid s,a)$ is measurable;
    \item $\initdist$ is an initial probability distribution on $S$. 
\end{itemize}
When additionally equipped with a measurable reward function 
$r : S \times \act \times S \to \mathbb{R}$, we refer to the tuple 
$(S, \act, P, r, \initdist)$ as a \emph{rewardful MDP}.
\end{definition}


A (stationary) \emph{stochastic policy} is a stochastic kernel
$\pi(\cdot \mid s)$ on $\act$ given $S$.  
A \emph{deterministic policy} is a measurable mapping $\pi : S \to \act$, 
identified with the kernel $\pi(\cdot \mid s) = \delta_{\pi(s)}$, 
the Dirac measure at $\pi(s)$.
%
Given a policy $\pi$, the controlled process $\{S_t, A_t\}_{t\ge 0}$
satisfies
\(
\Pr(A_t \in D \mid S_t = s) = \pi(D \mid s),
\)
\(
\Pr(S_{t+1} \in E \mid S_t = s, A_t = a) = P(E \mid s,a),
\)
for all measurable $D \subseteq \act$ and $E \subseteq S$.
Together with $\initdist$, which governs the distribution of $S_0$, 
this induces a probability measure $\Pr_\pi$ on the run space
$\Omega = S^\omega$, $\mathcal{F} = \mathcal{B}(S)^{\otimes \omega}$,
where runs $\rho = (s_0, s_1, s_2, \ldots)$ evolve according to $P$ under $\pi$.



Linear Temporal Logic (LTL) provides a specification language for
predicating over infinite executions.  Let $\mathrm{AP}$ be a finite set of
atomic propositions.  The syntax of LTL formulas is given by
\[
\varphi ::=\;
    p \mid \neg\varphi \mid \varphi_1 \land \varphi_2
    \mid \mathbf{X}\,\varphi \mid \varphi_1 \mathbin{\mathbf{U}} \varphi_2,
\qquad p \in \mathrm{AP}.
\]
where $\mathbf{X}$ is the ``next'' operator and $\mathbf{U}$ is the
``until'' operator.  Standard derived operators such as
$\mathbin{\mathbf{F}}\varphi$ (eventually) and $\mathbin{\mathbf{G}}\varphi$ (always) are
defined as usual.  For a detailed account of the syntax and semantics
of LTL, we refer the reader to the standard literature~\cite{baierPrinciplesModelChecking2008}.

A \emph{labelled MDP} is an MDP $(S, \act, P, \initdist)$ additionally equipped 
with a finite set of atomic propositions $\mathit{AP}$ and a labelling function 
$L : S \to 2^{\mathit{AP}}$, which assigns to each state the set of atomic propositions that hold 
in such state. A run $\rho = (s_0, s_1, s_2, \ldots) \in S^\omega$ then induces a 
corresponding word $L(\rho) = L(s_0)\,L(s_1)\,L(s_2)\cdots$ over $2^{\mathit{AP}}$.

Any LTL formula $\varphi$ can be translated into a deterministic $\omega$-regular automaton
$\mathcal{A}_\varphi$ whose accepted words are precisely those satisfying
$\varphi$~\cite{baierPrinciplesModelChecking2008}, reducing verification to the analysis of the
product between the labelled MDP and $\mathcal{A}_\varphi$. Here we use the Streett acceptance condition,
which is expressive enough to capture all $\omega$-regular (and, consequently, all LTL) properties. 

\begin{definition}[Deterministic Streett Automaton]
A \emph{deterministic Streett automaton (DSA)} is a tuple
$\mathcal{A} = (Q, 2^{\mathit{AP}}, \delta, q_0, \mathit{Acc})$
where:
\begin{itemize}
    \item $Q$ is a finite set of states;
    \item $2^{\mathit{AP}}$ is the input alphabet;
    \item $\delta : Q \times 2^{\mathit{AP}} \to Q$ is a deterministic transition function;
    \item $q_0 \in Q$ is the initial state;
    \item $\mathit{Acc} = \{(A_1, B_1), \ldots, (A_k, B_k)\}$ is a finite set of 
    \emph{Streett pairs}, with $A_i, B_i \subseteq Q$ for each $i$.
\end{itemize}
A run $\varrho = (q_0, q_1, q_2, \ldots) \in Q^\omega$ of $\mathcal{A}$ on a word
$\sigma = \sigma_0\sigma_1\sigma_2\cdots \in (2^{\mathit{AP}})^\omega$ satisfies
$q_{t+1} = \delta(q_t, \sigma_t)$ for all $t \geq 0$.
The run is \emph{accepting} if for all pairs $(A_i, B_i) \in \mathit{Acc}$,
\[
\inf(\varrho) \cap A_i \neq \emptyset \implies \inf(\varrho) \cap B_i \neq \emptyset,
\]
where $\inf(\varrho) \subseteq Q$ denotes the set of states visited infinitely often.
\end{definition}

Streett supermartingale certificates are real-valued functions that decrease in expectation within designated regions of the state space and which serve as sufficient conditions for almost-sure satisfaction of $\omega$-regular properties~\cite{abateStochasticOmegaRegularVerification2024,kuraHierarchySupermartingalesoRegular2025}.
\begin{definition}[Streett supermartingales~{\cite{abateStochasticOmegaRegularVerification2024,kuraHierarchySupermartingalesoRegular2025}}]
\label{def:streett-supermartingale}
Let $\mathcal{M} = (S, \act, P, \initdist)$ be an MDP,
let $\pi$ be a stationary policy,
and let $(A,B)$ be a Streett pair with $A,B \subseteq S$.
Suppose there exist a measurable set $I \subseteq S$, a nonnegative
measurable function $W : I \to \mathbb{R}_{\ge 0}$, and constant
$\varepsilon$ such that:
\begin{align}
&\initdist(I) = 1, 
\label{def:streett-supermartingale:eq-invariant-init}
\\[4pt]
&\forall s \in I:\;
   \Pr\nolimits_\pi(S_1 \in I \mid S_0 = s) = 1 
\label{def:streett-supermartingale:eq-invariance}
\\[4pt]
&\forall s \in (A \setminus B) \cap I:\;
    \mathbb{E}_\pi[W(S_1) \mid S_0 = s] \le W(s) - \varepsilon,
\label{def:streett-supermartingale:eq-strictly-decreasing}
\\[4pt]
&\forall s \in I \setminus (A \cup B):\;
    \mathbb{E}_\pi[W(S_1) \mid S_0 = s] \le W(s).
\label{def:streett-supermartingale:eq-non-increasing}
\end{align}
Then $W$ is a \emph{Streett supermartingale} for $(A,B)$.  In particular,
$W$ certifies that the process controlled by $\pi$ satisfies the Streett objective almost surely:
\[
\Pr_\pi\!\left(
  \sum_{t=0}^\infty \mathbf{1}_A(S_t) = \infty
  \implies
  \sum_{t=0}^\infty \mathbf{1}_B(S_t) = \infty
\right) = 1.
\]
We refer to the set $I$
as a \emph{supporting invariant}.
\end{definition}

To verify that a policy $\pi$ on an MDP $\mathcal{M}$ almost surely satisfies 
an LTL specification $\varphi$, one applies this definition to the product MDP 
$\mathcal{M} \otimes \mathcal{A}_\varphi$, replacing $S$ with $S \times Q$ and 
each Streett pair $(A_i, B_i)$ with its lifted counterpart 
$(S \times A_i, S \times B_i) \in \mathit{Acc}^\otimes$.
Since the DSA acceptance condition requires all $k$ Streett pairs to be satisfied 
simultaneously, it suffices to find a Streett supermartingale for each pair 
$(S \times A_i, S \times B_i) \in \mathit{Acc}^\otimes$ independently.

\section{Value Functions as Supermartingale Certificates}
\label{sec:value-function-certificates}
Our main contribution establishes that searching for a Streett supermartingale 
certificate reduces to standard RL with an appropriate choice of reward function.
We propose two reward designs under which the value function of a policy that 
almost surely satisfies a given specification encodes a valid Streett supermartingale 
certificate. The first reward design requires knowledge of the absorbing sets induced by the policy 
under analysis, while the second requires only knowledge of the specification, with the
tradeoffs between the two different designs discussed in \Cref{sec:conclusions}.
We begin by recalling some central definitions and introducing the fundamental lemmas on which the proofs of the theorems rely.

\begin{definition}[Value Function]\label{def:val-func}
Given a rewardful MDP $(S, \act, P, r, \initdist)$,
a policy $\pi$, and discount factor $\gamma \in [0,1)$, the
state-value function is
\[
V^\pi(s)
=
\mathbb{E}_\pi\!\left[
    \sum_{t=0}^\infty \gamma^t\, r(S_t, A_t, S_{t+1})
    \,\middle|\, S_0 = s
\right].
\]
It satisfies the Bellman equation
\(
V^\pi(s)
= \mathbb{E}_\pi\!\big[r(s, A_0, S_1) + \gamma\, V^\pi(S_1) \mid S_0 = s\big].
\)
When the discount factor is state-dependent, i.e.\ $\Gamma \colon S \to [0,1]$,
the Bellman equation becomes
\(
V^\pi(s)
= \mathbb{E}_\pi\!\big[r(s, A_0, S_1) + \Gamma(s)\, V^\pi(S_1) \mid S_0 = s\big].
\)
\end{definition}
%

Our first reward design requires knowledge
of the absorbing sets under the transition kernel induced by a policy.

\begin{definition}[Absorbing set]
\label{def:absorbing-set}
Let $\mathcal{M} = (S, \act, P, r, \initdist)$ be an MDP,
let $\pi$ be a policy. A measurable set $U \subseteq S$ is said to
be absorbing if and only if
\[
\forall s \in U, \; \Pr_\pi(S_1 \in U \mid S_0 = s) = 1 
\]
\end{definition}

\begin{definition}[Largest absorbing subset]
\label{def:largest-absorbing-subset}
Let $\mathcal{M} = (S, \act, P, r, \initdist)$ be an MDP,
let $\pi$ be a policy, and let $X \subseteq S$ be measurable.
The \emph{largest absorbing subset} of $X$ under~$\pi$ is%
%
%
\[
\mathrm{Abs}_\pi(X) := \bigl\{\, s \in S : \Pr_\pi \bigl( \forall t \ge 0 : S_t \in X \mid S_0 = s \bigr) = 1 \,\bigr\}.
\]
\end{definition}

Intuitively, the largest absorbing subset of $X \subseteq S$ is
the set of states from which the MDP remains in $X$ for all time, almost surely. In other words, in terms of the function $h^\pi : S \to [0, 1]$ given by $h^\pi(s) := \Pr_\pi( \forall t \geq 0 \colon S_t \in X \mid S_0 = s)$, which evaluates the probability of remaining forever in $X$ from state $s$, we see that $\mathrm{Abs}_\pi(X)$ is the set of states at which $h^\pi$ evaluates to 1.
We observe that $\mathrm{Abs}_\pi(X) \subseteq X$, which follows from the fact that the trajectory cannot remain in $X$ for all $t \geq 0$ if $S_0 \notin X$. To prove that $\mathrm{Abs}_\pi(X)$ is indeed absorbing (\Cref{def:absorbing-set}), we recall that $h^\pi$ satisfies the martingale property within $X$ \cite[Theorem 4.1.3]{douc2018markov}, namely:
\(
    \forall s \in X:\;
    \mathbb{E}_\pi[h^\pi(S_1) \mid S_0 = s] = h^\pi(s).
\)
For $s \in \mathrm{Abs}_\pi(X)$ we have $h^\pi(s) = 1$, so $\mathbb{E}_\pi[h^\pi(S_1) \mid S_0 = s] = 1$. Since $h^\pi \le 1$, this forces $h^\pi(S_1) = 1$ almost surely, i.e.\ $S_1 \in \mathrm{Abs}_\pi(X)$ almost surely. 

Central to our approach is reasoning about the first time a policy enters a set where positive reward is observed.
\begin{definition}[First hitting time {\cite[Def.~3.2.2]{douc2018markov}}]
\label{def:first-hitting-time}
Let $\mathcal{M} = (S, \act, P, r, \initdist)$ be an MDP, let $\pi$ be a policy and
$U \subseteq S$ be a measurable set.
We say that
\[
\tau_U := \inf\{t \geq 0 : S_t \in U\}
\]
is the first hitting time of $U$.
\end{definition}

The following lemmata underpin our first reward design; complete proofs are provided in 
Appendix~\ref{appendix:proofs}.
We first establish that, when reward is collected only upon visiting a 
measurable set~$U$, the value function can be expressed in terms of the 
first hitting time of~$U$.
\begin{restatable}{lemma}{lemmaDecomposeV}
\label{lem:decompose-v}
Let $U \subseteq S$ be measurable, and let $r(s, a, s') = \mathbf{1}_{\{s \in U\}}$. For any policy $\pi$ and state \(s \in S\),%
\[
V^\pi(s)
= \mathbb{E}_\pi\!\left[
   \gamma^{\tau_U} 
   V^\pi\!\big(S_{\tau_U}\big)
   \mathbf{1}_{\{\tau_U<\infty\}}
   \mid S_0 = s
  \right].
\]
\end{restatable}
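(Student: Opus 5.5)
The plan is to split the discounted sum at the hitting time $\tau_U$ and then apply the strong Markov property. Since $r(S_t,A_t,S_{t+1}) = \mathbf{1}_U(S_t)$, every term with $t < \tau_U$ vanishes, because $S_t \notin U$ for those $t$ by definition of $\tau_U$. On the event $\{\tau_U = \infty\}$ the whole sum is therefore zero. Writing $R = \sum_{t\ge 0}\gamma^t \mathbf{1}_U(S_t)$, this gives pathwise
\[
R = \mathbf{1}_{\{\tau_U<\infty\}}\sum_{t\ge \tau_U}\gamma^t \mathbf{1}_U(S_t) = \mathbf{1}_{\{\tau_U<\infty\}}\,\gamma^{\tau_U}\sum_{k\ge 0}\gamma^k \mathbf{1}_U(S_{\tau_U+k}).
\]
Because $0\le R\le 1/(1-\gamma)$, all expectations are finite and Fubini/Tonelli may be used freely.

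The next step is to take $\mathbb{E}_\pi[\cdot\mid S_0=s]$ and condition on $\mathcal{F}_{\tau_U}$, the stopped $\sigma$-algebra of the process $(S_t,A_t)$. Both $\tau_U$ and $\mathbf{1}_{\{\tau_U<\infty\}}$ are $\mathcal{F}_{\tau_U}$-measurable, since $\tau_U$ is a stopping time (as $U$ is measurable). Hence they can be pulled out of the inner conditional expectation, leaving
\[
\mathbb{E}_\pi\Bigl[\sum_{k\ge0}\gamma^k\mathbf{1}_U(S_{\tau_U+k}) \Bigm| \mathcal{F}_{\tau_U}\Bigr] \quad\text{on } \{\tau_U<\infty\}.
\]

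Under a stationary policy, the controlled process $\{S_t\}$ is a time-homogeneous Markov chain with kernel $P_\pi(E\mid s)=\int P(E\mid s,a)\,\pi(da\mid s)$. The strong Markov property for this chain (e.g.\ as in \cite{douc2018markov}) then says that on $\{\tau_U<\infty\}$ the shifted process $(S_{\tau_U+k})_{k\ge0}$ is, conditionally on $\mathcal{F}_{\tau_U}$, a copy of the chain started at $S_{\tau_U}$. The inner conditional expectation is therefore $V^\pi(S_{\tau_U})$. This also uses that $V^\pi$ is measurable, which follows from the measurability of the kernels.

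The main technical point is justifying the strong Markov step in the general measurable setting. One clean way is to decompose $\{\tau_U<\infty\}=\bigcup_n\{\tau_U=n\}$, with each event in $\mathcal{F}_n$, and apply the ordinary Markov property at each deterministic time $n$. Summing over $n$ by monotone convergence then avoids invoking the strong Markov theorem in full generality. Since the reward depends only on the current state, no action-dependence issue arises in the shift. Combining the steps yields the claimed identity.
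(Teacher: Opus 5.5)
Your proof is correct and is the same argument as the paper's: the terms before $\tau_U$ vanish, and the discounted tail from $\tau_U$ on is identified with $\gamma^{\tau_U}V^\pi(S_{\tau_U})$ before taking expectations. You actually handle that identification more carefully than the paper, which asserts $\sum_t\gamma^t\mathbf{1}_{\{S_t\in U\}}=\gamma^{\tau_U}V^\pi(S_{\tau_U})\mathbf{1}_{\{\tau_U<\infty\}}$ as an almost-sure pathwise identity (false as stated, since it only holds after conditioning on $\mathcal{F}_{\tau_U}$ as in your strong Markov step)---one small fix: take $\mathcal{F}_{\tau_U}$ generated by the states (and earlier actions) up to $\tau_U$ but not by $A_{\tau_U}$, because for a stochastic policy conditioning on $A_{\tau_U}$ gives a $Q$-value rather than $V^\pi(S_{\tau_U})$.
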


A key aspect of the Streett supermartingale conditions lies in the expected change after one step.
Building on the decomposition in \Cref{lem:decompose-v}, we can express this expected one-step
change in the value function in terms of the first visit to~\(U\). 

\begin{restatable}{lemma}{lemmaDeltaFormula}
\label{lem:delta-formula}
Let $U \subseteq S$ be measurable, and let $r(s, a, s') = \mathbf{1}_{\{s \in U\}}$.
The one-step change $\Delta \colon S \setminus U \to \mathbb{R}$, defined as
$\Delta(s) := \mathbb{E}_\pi\!\big[V^\pi(S_1)\mid S_0=s\big] - V^\pi(s)$, satisfies
\[
\Delta(s)
=
\mathbb{E}_\pi\!\Big[
   (1-\gamma)\,
   \gamma^{\tau_U(S_1)}\,
   V^\pi(S_{\tau_U(S_1)})\,
   \mathbf{1}_{\{\tau_U(S_1)<\infty\}}
   \;\Big|\; S_0=s
\Big],
\]
where $\tau_U(S_1)$ denotes the first hitting time of $U$ starting at $S_1$.%
\footnote{More precisely, $\tau_U(S_1) = \tau_U \circ \theta_1$, 
where $\theta_1 : S^\omega \to S^\omega$ is the shift operator defined by 
$\theta_1(s_0, s_1, s_2, \ldots) = (s_1, s_2, \ldots)$ \cite[Definition 3.1.8]{douc2018markov}, and $\tau_U$ (the 
first hitting time of $U$) is applied to the shifted process.}
\end{restatable}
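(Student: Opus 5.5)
For $s \in S\setminus U$ the reward at time $0$ is $\mathbf{1}_{\{s\in U\}}=0$. The Bellman equation of \Cref{def:val-func} therefore reduces to $V^\pi(s)=\gamma\,\mathbb{E}_\pi[V^\pi(S_1)\mid S_0=s]$. Hence
\[
\Delta(s)=\mathbb{E}_\pi[V^\pi(S_1)\mid S_0=s]-\gamma\,\mathbb{E}_\pi[V^\pi(S_1)\mid S_0=s]=(1-\gamma)\,\mathbb{E}_\pi[V^\pi(S_1)\mid S_0=s].
\]
All quantities are finite, since $0\le V^\pi\le 1/(1-\gamma)$ because the reward lies in $[0,1]$. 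It remains to rewrite $V^\pi(S_1)$ in terms of the first hitting time of $U$ from $S_1$.

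For this we apply \Cref{lem:decompose-v} at the state $S_1$ and combine it with the Markov property. \Cref{lem:decompose-v} gives, for every $s'\in S$, $V^\pi(s')=\mathbb{E}_\pi[\gamma^{\tau_U}V^\pi(S_{\tau_U})\mathbf{1}_{\{\tau_U<\infty\}}\mid S_0=s']$. Define $F:=\gamma^{\tau_U}V^\pi(S_{\tau_U})\mathbf{1}_{\{\tau_U<\infty\}}$, a nonnegative bounded measurable functional of the run. Then $V^\pi(S_1)=\mathbb{E}_\pi[F\circ\theta_1\mid \mathcal{F}_1]$ by the (simple) Markov property for the stationary policy $\pi$ (e.g.\ \cite[Theorem~3.1.11]{douc2018markov}). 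By the definition of the shift, $F\circ\theta_1=\gamma^{\tau_U(S_1)}V^\pi(S_{\tau_U(S_1)})\mathbf{1}_{\{\tau_U(S_1)<\infty\}}$, where $S_{\tau_U(S_1)}$ denotes the state of the shifted process at its hitting time, i.e.\ $S_{1+\tau_U\circ\theta_1}$. Conditioning on $S_0=s$ and using the tower property, we obtain
\[
\mathbb{E}_\pi[V^\pi(S_1)\mid S_0=s]=\mathbb{E}_\pi[F\circ\theta_1\mid S_0=s].
\]
Multiplying by $(1-\gamma)$ gives the claimed formula.

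The main technical point is justifying the Markov-property step on the controlled process. With a stationary policy, $(S_t)$ is a time-homogeneous Markov chain on $S$ with kernel $P_\pi(E\mid s)=\int P(E\mid s,a)\,\pi(da\mid s)$. The identity $\mathbb{E}_\pi[F\circ\theta_1\mid\mathcal{F}_1]=\mathbb{E}_{S_1}[F]$ then holds for bounded measurable $F$, and measurability of $F$ follows from that of $\tau_U$ and of $V^\pi$. One must also note that the reward depends only on the current state, so the action and next-state arguments in the Bellman equation play no role. For uncountable $S$, this is the only place where measure-theoretic care is needed; the rest is algebra.
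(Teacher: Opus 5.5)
Your proof is correct, but it takes a different route from the paper's.

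The paper applies \Cref{lem:decompose-v} twice, once from $s$ and once from $S_1$, and subtracts. It then extracts the factor $(1-\gamma)$ from the hitting-time relation $\tau_U = 1+\tau_U\circ\theta_1$, which holds because $s\notin U$, and it splits into the cases $\{\tau_U(S_1)<\infty\}$ and $\{\tau_U(S_1)=\infty\}$.

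You instead read off $(1-\gamma)$ directly from the Bellman equation, since the immediate reward vanishes off $U$. This gives $\Delta(s)=(1-\gamma)\,\mathbb{E}_\pi[V^\pi(S_1)\mid S_0=s]$. You then need only one application of \Cref{lem:decompose-v}, moved to time $1$ by the simple Markov property and the tower rule.

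Your version is shorter and more careful measure-theoretically. The paper states its argument as a pathwise identity in which $V^\pi(s)$ and $V^\pi(S_1)$ are treated as realised returns; for instance, it asserts that both vanish on $\{\tau_U(S_1)=\infty\}$. That is only correct after taking expectations and invoking the Markov property, which is exactly the step you make explicit. You also avoid both the shift relation for hitting times and the case split.

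The paper's route, once made rigorous in the same way, has one advantage: it shows the $(1-\gamma)$ factor in hitting-time terms, since reaching $U$ from $S_1$ costs one fewer discount factor than from $s$. That intuition is reused in \Cref{lem:v-strictly-increasing-uniform-bound}. Your intermediate identity, for its part, is the same Bellman manipulation the paper itself uses in the proof of \Cref{thm:v-streett-sm-spec-reward}.
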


We now analyse the sign and magnitude of the expected one-step change in the value function under specific conditions relevant to our reward design.
%
%
Within an absorbing set $I$ from which the rewarded set $U$ is almost surely reachable, the expected one-step change
is strictly positive until $U$ is reached.

\begin{restatable}{lemma}{lemmaVStrictlyIncreasing}
\label{lem:v-strictly-increasing}
Let $r(s, a, s') = \mathbf{1}_{\{s \in U\}}$ with $U \subseteq S$ measurable. Let $I \subseteq S$ 
be an absorbing set under a policy $\pi$. Assume that
$\forall s \in I \setminus U : \Pr_\pi \left(\tau_U < \infty \mid S_0 = s\right) = 1.$
Then, the expected one-step change is positive:
\[
\forall s \in I \setminus U : \mathbb{E}_\pi\!\big[V^\pi(S_1)\mid S_0=s\big] - V^\pi(s) > 0.
\]
\end{restatable}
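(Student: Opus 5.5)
The plan is to start from \Cref{lem:delta-formula}, which for $s \in I \setminus U$ gives
\[
\Delta(s) = \mathbb{E}_\pi\!\Big[(1-\gamma)\,\gamma^{\tau_U(S_1)}\,V^\pi(S_{\tau_U(S_1)})\,\mathbf{1}_{\{\tau_U(S_1)<\infty\}} \;\Big|\; S_0=s\Big].
\]
Since $\gamma<1$, the factor $(1-\gamma)$ is strictly positive, and the integrand is nonnegative. It therefore suffices to show that the integrand is strictly positive on an event of positive probability. I will in fact show that it is strictly positive almost surely.

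\textbf{Step 1: positivity of $V^\pi$ on $U$.} For $u \in U$ the reward at time $0$ is $\mathbf{1}_{\{u\in U\}}=1$, and all rewards are nonnegative. Hence $V^\pi(u) \ge 1 > 0$. Consequently, on the event $\{\tau_U(S_1)<\infty\}$ we have $S_{\tau_U(S_1)} \in U$, so $V^\pi(S_{\tau_U(S_1)}) \ge 1$. There is a measurability subtlety here: $S_{\tau_U(S_1)}$ refers to the state $S_{1+\tau_U\circ\theta_1}$, which lies in $U$ by definition of the hitting time. This is the same convention used in \Cref{lem:delta-formula}.

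\textbf{Step 2: almost-sure finiteness of $\tau_U(S_1)$.} Fix $s \in I\setminus U$. Because $I$ is absorbing, $S_1 \in I$ almost surely. If $S_1 \in U$, then $\tau_U(S_1)=0$. If $S_1 \in I\setminus U$, the hypothesis gives $\Pr_\pi(\tau_U<\infty \mid S_0=S_1)=1$. By the Markov property,
\[
\Pr_\pi\big(\tau_U\circ\theta_1<\infty \mid S_0=s\big)=\mathbb{E}_\pi\big[\Pr_\pi(\tau_U<\infty\mid S_0=S_1)\mid S_0=s\big]=1.
\]
An even simpler route is available: $s\notin U$ forces $\tau_U\ge 1$, and then $\tau_U = 1+\tau_U\circ\theta_1$. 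The hypothesis applied at $s$ itself then directly yields $\tau_U\circ\theta_1<\infty$ almost surely, so invariance of $I$ is not even needed for this step. I would present this route, with the absorbing property as a remark.

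\textbf{Step 3: conclusion.} Combining the two steps, the integrand is almost surely at least $(1-\gamma)\gamma^{\tau_U(S_1)} > 0$. The expectation of an almost surely strictly positive random variable is strictly positive, so $\Delta(s)>0$.

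The main obstacle is bookkeeping rather than substance. One must handle the shift operator and the Markov property correctly so that $\tau_U(S_1)$ is finite almost surely. One must also ensure that \Cref{lem:delta-formula} applies, which requires $V^\pi$ to be finite. This holds because $V^\pi \le 1/(1-\gamma)$.
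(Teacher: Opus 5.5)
Your proof is correct and follows the paper's argument: you apply \Cref{lem:delta-formula}, use $V^\pi \ge 1$ on $U$ and the almost-sure finiteness of $\tau_U(S_1)$ to bound the integrand below by $(1-\gamma)\gamma^{\tau_U(S_1)}$, and conclude strict positivity. That last step needs $\gamma>0$, which the paper states explicitly and you leave implicit. The only difference is your preferred route to $\tau_U(S_1)<\infty$ almost surely, via $\tau_U = 1+\tau_U\circ\theta_1$ at $s\notin U$; it is valid and shows the absorbing hypothesis on $I$ is not needed for this lemma, whereas the paper uses absorbingness of $I$ together with the hypothesis at $S_1$.
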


When the rewarded set $U$ is absorbing and its reward is constant, the value function is also constant within $U$.

\begin{restatable}{lemma}{lemmaVNonIncreasing}
\label{lem:v-non-increasing}
Let $U \subseteq S$ be measurable, assume that
$s \in U \implies r(s, a, s') = 1$
and
$U$ is absorbing under a policy $\pi$.
Then the expected one-step change is zero:
\[
\forall s \in U : \mathbb{E}_\pi\!\big[V^\pi(S_1)\mid S_0=s\big] - V^\pi(s) = 0.
\]
\end{restatable}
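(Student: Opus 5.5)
The plan is to show that $V^\pi$ is constant on $U$, equal to $1/(1-\gamma)$. The zero one-step change then follows immediately from absorption. The natural tool is the definition of the value function (\Cref{def:val-func}) applied pathwise, rather than the Bellman equation alone.

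First, fix $s \in U$ and consider a run starting from $S_0 = s$ under $\pi$. Since $U$ is absorbing (\Cref{def:absorbing-set}), induction on $t$ with the Markov property gives $\Pr_\pi(S_t \in U \text{ for all } t \ge 0 \mid S_0 = s) = 1$. Indeed, $\Pr_\pi(S_{t+1}\in U \mid S_t)=1$ on $\{S_t\in U\}$, and a countable intersection of almost-sure events is almost sure. On this event every reward satisfies $r(S_t, A_t, S_{t+1}) = 1$, because $S_t \in U$ and, by hypothesis, $r(s,a,s')=1$ whenever $s \in U$, regardless of $a$ and $s'$. Hence the discounted return equals $\sum_{t\ge0}\gamma^t = 1/(1-\gamma)$ almost surely. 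Taking expectations gives $V^\pi(s) = 1/(1-\gamma)$ for every $s \in U$. Since the return is bounded, there are no integrability issues.

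Next, for $s \in U$ we use absorption once more: $S_1 \in U$ almost surely, so $V^\pi(S_1) = 1/(1-\gamma)$ almost surely. Therefore
\[
\mathbb{E}_\pi\!\big[V^\pi(S_1)\mid S_0=s\big] = \tfrac{1}{1-\gamma} = V^\pi(s),
\]
which gives a one-step change of zero. As a consistency check, the same conclusion follows from the Bellman equation, $V^\pi(s) = 1 + \gamma\,\mathbb{E}_\pi[V^\pi(S_1)\mid S_0=s]$. Substituting the constant value yields $\mathbb{E}_\pi[V^\pi(S_1)\mid S_0 = s] = (V^\pi(s)-1)/\gamma = 1/(1-\gamma)$ when $\gamma>0$. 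The case $\gamma = 0$ is trivial, since then $V^\pi \equiv 1$ on $U$.

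The only mildly delicate step is the first one: passing from the one-step absorbing property to almost-sure confinement for all times. This requires the Markov property of the controlled process under a stationary policy, together with the fact that the conditional probability equals $1$ for every state in $U$, not merely almost every state. I would handle it by the induction above, writing $\Pr_\pi(S_{t+1}\notin U, S_t\in U) = \mathbb{E}_\pi[\mathbf{1}_U(S_t)\Pr_\pi(S_1\notin U\mid S_0=S_t)] = 0$ and summing over $t$.
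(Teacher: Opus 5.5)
Your proposal is correct and matches the paper's proof essentially step for step. Like the paper, you propagate absorption by induction to get $S_t \in U$ almost surely for all $t$, conclude that the return is identically $\tfrac{1}{1-\gamma}$ on $U$, and then use $S_1 \in U$ almost surely to obtain zero one-step change. Your Bellman-equation consistency check and the explicit measure-theoretic justification of the induction are harmless additions.
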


Finally, when the expected hitting time of the rewarded set~$U$ is uniformly bounded from above over a set~$F$, the expected one-step change of the value function admits a uniform lower bound~$\varepsilon > 0$.

\begin{restatable}{lemma}{lemmaVStrictlyIncreasingUniformBound}
\label{lem:v-strictly-increasing-uniform-bound}
Let $r(s, a, s') = \mathbf{1}_{\{s \in U\}}$ with $U \subseteq S$.
Let  $F \subseteq S$ be a set,
and assume there exists a constant $\bar H < \infty$ such that
$\forall s \in 
F:
\mathbb{E}_\pi\!\big[\,\tau_U \mid S_0 = s\,\big] \le \bar H.$
Then, the expected one-step change admits a uniform positive lower bound:
\[
\forall s \in 
F
\setminus U:
\mathbb{E}_\pi\!\big[V^\pi(S_1)\mid S_0=s\big] - V^\pi(s)
\ge
\varepsilon,
\]
where $\varepsilon := (1-\gamma)\,\gamma^{\bar H - 1} > 0$.
\end{restatable}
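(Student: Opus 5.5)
The plan is to combine the one-step formula of \Cref{lem:delta-formula} with a pointwise lower bound on $V^\pi$ over $U$, and then to bound the remaining discount factor by Jensen's inequality. Fix $s \in F \setminus U$ and write $\tau' := \tau_U \circ \theta_1$ for the hitting time of $U$ started from $S_1$. Since $s \in F\setminus U$ lies in the domain $S\setminus U$ of \Cref{lem:delta-formula}, that lemma gives
\[
\mathbb{E}_\pi\!\big[V^\pi(S_1)\mid S_0=s\big] - V^\pi(s)
= \mathbb{E}_\pi\!\Big[(1-\gamma)\,\gamma^{\tau'}\,V^\pi(S_{\tau'})\,\mathbf{1}_{\{\tau'<\infty\}} \,\Big|\, S_0=s\Big].
\]

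\textbf{Step 1: $V^\pi \ge 1$ on $U$.} The reward $r(s,a,s')=\mathbf{1}_{\{s\in U\}}$ is nonnegative and equals $1$ at time $0$ whenever $S_0 \in U$. Hence $V^\pi(u) \ge 1$ for every $u \in U$. On $\{\tau'<\infty\}$ we have $S_{\tau'} \in U$, so
\[
\Delta(s) \ge (1-\gamma)\,\mathbb{E}_\pi\!\big[\gamma^{\tau'}\mathbf{1}_{\{\tau'<\infty\}} \mid S_0=s\big].
\]

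\textbf{Step 2: relate $\tau'$ to $\tau_U$.} Because $s\notin U$, we have $\tau_U \ge 1$ and $\tau_U = 1+\tau'$ pathwise. Therefore $\mathbb{E}_\pi[\tau' \mid S_0=s] = \mathbb{E}_\pi[\tau_U\mid S_0=s]-1 \le \bar H - 1$. In particular $\bar H \ge 1$, and $\tau'<\infty$ almost surely, so the indicator can be dropped.

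\textbf{Step 3: Jensen.} The map $x \mapsto \gamma^x$ is convex on $[0,\infty)$ and decreasing. Applied to the almost surely finite, integrable variable $\tau'$, this gives
\[
\mathbb{E}_\pi[\gamma^{\tau'}\mid S_0=s] \ge \gamma^{\mathbb{E}_\pi[\tau'\mid S_0=s]} \ge \gamma^{\bar H-1}.
\]
Combining the three steps yields $\Delta(s) \ge (1-\gamma)\gamma^{\bar H-1} = \varepsilon$, uniformly over $s\in F\setminus U$.

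\textbf{Main obstacle.} The delicate point is the integrability and finiteness needed in Step 3. One must check that $\tau'$ is finite almost surely, which follows from the finite expectation bound, so that $\gamma^{\tau'}\mathbf{1}_{\{\tau'<\infty\}}=\gamma^{\tau'}$ and Jensen applies to a genuine real-valued integrable random variable. One must also check that $\bar H - 1 \ge 0$, so that $\gamma^{\bar H -1}$ is the intended positive constant. Both facts come from Step 2, and I would verify them before invoking convexity. The degenerate case $\gamma=0$ gives $\varepsilon = 0$ (or is excluded by the claimed positivity), so I would assume $\gamma\in(0,1)$.
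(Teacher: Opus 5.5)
Your proof is correct and matches the paper's argument step for step: \Cref{lem:delta-formula}, the bound $V^\pi \ge 1$ on $U$, the identity $\tau_U = 1+\tau_U\circ\theta_1$ giving $\mathbb{E}_\pi[\tau_U\circ\theta_1 \mid S_0=s]\le \bar H-1$, and Jensen for the convex decreasing map $x\mapsto\gamma^x$. You also state explicitly that $\tau_U\circ\theta_1<\infty$ almost surely, which justifies dropping the indicator before Jensen and which the paper leaves implicit; the only nit is that with $\tau' := \tau_U\circ\theta_1$ the state at which $U$ is hit is $S_{1+\tau'}$, not $S_{\tau'}$ (the paper's $S_{\tau_U(S_1)}$ has the same ambiguity), and this does not affect the argument.
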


Building on \Cref{lem:v-strictly-increasing,lem:v-non-increasing,%
lem:v-strictly-increasing-uniform-bound}, we now present our first reward design, which leverages knowledge
of the absorbing sets induced by a policy $\pi$, to certify almost-sure Streett acceptance.


\begin{theorem}
\label{thm:v-streett-supermartingale-bscc}
Let 
\(\mathcal{M} = (S, \act, P, r, \initdist)\)
be an MDP, and let \(\pi\) be any policy.
Let \(I \subseteq S\) be a set and let \((A,B)\) be a Streett pair with 
\(A,B \subseteq S\).
Assume:

\begin{enumerate}

    \item \textbf{Invariant:}
    \label{thm:v-streett-supermartingale-bscc-infty:hyp-invariant}
    \[
    I \textit{ is absorbing and }
    \initdist(I)=1,
    \]

    \item \textbf{Almost-sure reachability:}
    \label{thm:v-streett-supermartingale-bscc-infty:hyp-reachability-1}
    \[
    \forall s\in I,\ 
    \Pr_\pi\!\big(
        \tau_{\,B \cup \mathrm{Abs}_\pi((A\cup B)^c)}
        < \infty   
        \mid S_0 = s
    \big)
    = 1.
    \]
    \item \textbf{Uniform bound on hitting time:}
    \label{thm:v-streett-supermartingale-bscc-infty:hyp-reachability-1-strict}
    There exists a constant \(\bar H < \infty\) such that for all 
    \(s \in (A \setminus B)\cap I\),
    \[
    \mathbb{E}_\pi\!\big[
        \tau_{B \cup \mathrm{Abs}_\pi((A\cup B)^c)}
        \mid S_0 = s
    \big]
    \;\le\;
    \bar H .
    \]

    \item \textbf{Reward:}
    \label{thm:v-streett-supermartingale-bscc-infty:hyp-reward}
    \[
    r(s,a,s') \;=\; 
    \mathbf{1}_{\{\,s \in B \cup \mathrm{Abs}_\pi((A\cup B)^c)\,\}}.
    \]

\end{enumerate}
Then \(W(s):= C - V^{\pi}(s)\), with \(C := \tfrac{1}{1-\gamma}\),
is a Streett supermartingale for \((A,B)\).
\end{theorem}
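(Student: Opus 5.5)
We set $U := B \cup \mathrm{Abs}_\pi((A\cup B)^c)$, so the reward in hypothesis~\ref{thm:v-streett-supermartingale-bscc-infty:hyp-reward} is $r(s,a,s') = \mathbf{1}_{\{s\in U\}}$. This puts us in the setting of \Cref{lem:decompose-v,lem:delta-formula,lem:v-strictly-increasing,lem:v-strictly-increasing-uniform-bound}. We check the four conditions of \Cref{def:streett-supermartingale} for $W = C - V^\pi$ with supporting invariant $I$.

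Nonnegativity is immediate. Since $0 \le r \le 1$, we have $0 \le V^\pi \le \sum_t \gamma^t = \tfrac{1}{1-\gamma} = C$, so $W \ge 0$. Conditions \eqref{def:streett-supermartingale:eq-invariant-init} and \eqref{def:streett-supermartingale:eq-invariance} are exactly hypothesis~\ref{thm:v-streett-supermartingale-bscc-infty:hyp-invariant}. For the drift conditions we use
\[
\mathbb{E}_\pi[W(S_1)\mid S_0=s] - W(s) = -\big(\mathbb{E}_\pi[V^\pi(S_1)\mid S_0=s] - V^\pi(s)\big),
\]
so it suffices to show that the one-step change of $V^\pi$ is at least $\varepsilon$ on $(A\setminus B)\cap I$ and nonnegative on $I\setminus(A\cup B)$.

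\emph{Strict decrease on $(A\setminus B)\cap I$.} Note that $A \cap \mathrm{Abs}_\pi((A\cup B)^c) = \emptyset$, because $\mathrm{Abs}_\pi(X)\subseteq X$. Hence $(A\setminus B)\cap I \subseteq F\setminus U$ with $F := (A\setminus B)\cap I$. Hypothesis~\ref{thm:v-streett-supermartingale-bscc-infty:hyp-reachability-1-strict} bounds $\mathbb{E}_\pi[\tau_U\mid S_0=s]\le\bar H$ on $F$. \Cref{lem:v-strictly-increasing-uniform-bound} then gives the bound with $\varepsilon = (1-\gamma)\gamma^{\bar H-1} > 0$, which establishes \eqref{def:streett-supermartingale:eq-strictly-decreasing}.

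\emph{Non-increase on $I\setminus(A\cup B)$.} Split this set into two parts.
\begin{itemize}
\item On $I\setminus(A\cup B)\setminus U$: $I$ is absorbing and, by hypothesis~\ref{thm:v-streett-supermartingale-bscc-infty:hyp-reachability-1}, $U$ is reached almost surely from every state of $I$. So \Cref{lem:v-strictly-increasing} gives a strictly positive change, in particular a nonnegative one.
\item On $I\cap U\setminus(A\cup B)$: such states lie in $\mathrm{Abs}_\pi((A\cup B)^c)$. That set is absorbing, as argued after \Cref{def:largest-absorbing-subset}, and the reward there is identically $1$. Applying \Cref{lem:v-non-increasing} to it gives zero change.
\end{itemize}
The application of \Cref{lem:v-non-increasing} needs care. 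The lemma is stated with the absorbing set playing the role of $U$, while our reward is $\mathbf{1}_U$ for the larger set $U$. Its hypotheses only ask that the reward equal $1$ on the absorbing set and that the set be absorbing, and both hold for $\mathrm{Abs}_\pi((A\cup B)^c)$. Directly, for $s$ in this set every future state stays there almost surely, so $V^\pi(s) = C$ and $\mathbb{E}_\pi[V^\pi(S_1)\mid S_0=s] = C$, and the change is zero.

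I expect the main obstacle to be this bookkeeping, together with one more subtlety. \Cref{lem:v-strictly-increasing} concerns states in $I\setminus U$. Its proof, via \Cref{lem:delta-formula}, needs $V^\pi$ to be positive at the hitting point, which holds because $V^\pi \ge 1$ on $U$. The remaining work is to verify that the case split covers $I\setminus(A\cup B)$ exactly and that $A\cap U\subseteq B$. Measurability of $\mathrm{Abs}_\pi$ follows from the measurability of $h^\pi$.
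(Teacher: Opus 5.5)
Your proposal is correct and follows the paper's proof essentially step for step. It uses nonnegativity from $0 \le V^\pi \le C$; \Cref{lem:v-strictly-increasing-uniform-bound} with $U = B \cup \mathrm{Abs}_\pi((A\cup B)^c)$ together with $(A\setminus B)\cap U = \emptyset$ for the strict decrease; and the same case split of $I\setminus(A\cup B)$ into states outside $U$ (handled by \Cref{lem:v-strictly-increasing}) and states in $\mathrm{Abs}_\pi((A\cup B)^c)$ (handled by \Cref{lem:v-non-increasing}). Two choices are minor tidy-ups rather than a different route: taking $F = (A\setminus B)\cap I$, which matches the uniform hitting-time hypothesis exactly where the paper takes $F = A\setminus B$, and applying \Cref{lem:v-non-increasing} to $\mathrm{Abs}_\pi((A\cup B)^c)$ directly rather than to its intersection with $I$.
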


The key insight behind the reward design is to assign positive reward whenever 
the process reaches $B$ or enters a state from which neither $A$ nor $B$ will 
ever be visited again---formally, the largest absorbing subset 
$\mathrm{Abs}_\pi((A \cup B)^c)$.
Note that 
hypothesis~\ref{thm:v-streett-supermartingale-bscc-infty:hyp-reachability-1} 
\textbf{(\textit{almost-sure reachability})} is equivalent to almost-sure Streett 
satisfaction within the supporting invariant $I$, while 
hypothesis~\ref{thm:v-streett-supermartingale-bscc-infty:hyp-reachability-1-strict} 
\textbf{(\textit{uniform bound on hitting time})} is a stronger requirement needed 
to establish the strictly decreasing condition~\eqref{def:streett-supermartingale:eq-strictly-decreasing} of the Streett supermartingale.

\begin{proof}[\Cref{thm:v-streett-supermartingale-bscc}]
We need to show that $W(s) = C - V^\pi(s)$ is a Streett supermartingale (\Cref{def:streett-supermartingale}) for the Streett pair $(A, B)$.

Conditions~\eqref{def:streett-supermartingale:eq-invariant-init} and \eqref{def:streett-supermartingale:eq-invariance} are identical to hypothesis~\ref{thm:v-streett-supermartingale-bscc-infty:hyp-invariant}, so we have to verify that \(W(s) := C - V^\pi(s)\) is nonnegative and satisfies conditions~\eqref{def:streett-supermartingale:eq-strictly-decreasing}%
~and~\eqref{def:streett-supermartingale:eq-non-increasing}.

\paragraph{Nonnegativity.} The reward definition (hypothesis~\ref{thm:v-streett-supermartingale-bscc-infty:hyp-reward}) induces (by \Cref{def:val-func}) the value function:
\begin{align}
V^\pi(s)
= 
\mathbb{E}_\pi\!\left[
  \sum_{t=0}^\infty 
  \gamma^t\, 
  \mathbf{1}_{\{S_t \in {B \cup \mathrm{Abs}_\pi((A\cup B)^c)}\}}  \,\middle|\, S_0 = s
\right].
\end{align}
Since the term $\mathbf{1}_{\{S_t \in {B \cup \mathrm{Abs}_\pi((A\cup B)^c)}\}} \leq 1$ for all \(t\),
it follows that \(V^\pi(s) \le \sum_{t=0}^\infty \gamma^t = C := \frac{1}{1-\gamma}\),
so \(W(s) = C - V^\pi(s) \ge 0\) for all \(s \in I\).

\paragraph{Condition~\eqref{def:streett-supermartingale:eq-strictly-decreasing}.}
The reward function is equal to $1$ within $B \cup \mathrm{Abs}_\pi((A\cup B)^c)$
(hypothesis~\ref{thm:v-streett-supermartingale-bscc-infty:hyp-reward} \textbf{(\textit{reward})})
and the expected hitting time of $B \cup \mathrm{Abs}_\pi((A\cup B)^c)$ is uniformly bounded from $(A \setminus B) \cap I$
(hypothesis~\ref{thm:v-streett-supermartingale-bscc-infty:hyp-reachability-1-strict} \textbf{(\textit{uniform bound on hitting time})}). Hence, we can use 
\Cref{lem:v-strictly-increasing-uniform-bound} with 
$U = B \cup \mathrm{Abs}_\pi((A\cup B)^c)$
and
$F = (A \setminus B)$ to obtain
a constant $\varepsilon$
such that
\begin{align}
\forall s \in \bigl((A \setminus B) \cap I\bigr) \setminus U
\mathpunct.
\mathbb{E}_\pi[V^\pi(S_1)\mid S_0=s] - V^\pi(s)
\;\ge\; \varepsilon.
\end{align}
States in $A \setminus B$ are by definition not in $B$, and 
$\mathrm{Abs}_\pi((A\cup B)^c) \subseteq (A \cup B)^c$ contains no states in $A$,
therefore $((A \setminus B) \cap I)\cap U = \emptyset$.
Hence $((A \setminus B) \cap I) \setminus U = (A \setminus B) \cap I$ and the quantifier can be simplified to
\begin{align}
\forall s \in (A \setminus B) \cap I
\mathpunct.
\mathbb{E}_\pi[V^\pi(S_1)\mid S_0=s] - V^\pi(s)
\;\ge\; \varepsilon.
\end{align}
Substituting $W = C - V^\pi$ gives
\begin{align}
\forall s \in (A \setminus B) \cap I
\mathpunct.
\mathbb{E}_\pi[W(S_1)\mid S_0=s]
\;\le\;
W(s) - \varepsilon,
\end{align}
which establishes condition~\eqref{def:streett-supermartingale:eq-strictly-decreasing}.

\paragraph{Condition \eqref{def:streett-supermartingale:eq-non-increasing}.}
The reward function is equal to $1$ within $B \cup \mathrm{Abs}_\pi((A\cup B)^c)$
(hypothesis~\ref{thm:v-streett-supermartingale-bscc-infty:hyp-reward} \textbf{(\textit{reward})}) and
$B \cup \mathrm{Abs}_\pi((A\cup B)^c$ is almost-surely reached from any state within the supporting invariant $I$
(hypothesis~\ref{thm:v-streett-supermartingale-bscc-infty:hyp-reachability-1} \textbf{(\textit{almost-sure reachability})}).
Therefore, thanks to \Cref{lem:v-strictly-increasing} with $U=B \cup \mathrm{Abs}_\pi((A\cup B)^c)$ we obtain
\begin{align}
\forall s \in I \setminus [{B \cup \mathrm{Abs}_\pi((A\cup B)^c)}]
\mathpunct.
\mathbb{E}_\pi[V^\pi(S_1)\mid S_0=s] - V^\pi(s) > 0.
\end{align}
Substituting $W = C - V^\pi$ gives
\begin{align}\label{eqn:notin-Abs-AuBc}
\forall s \in I \setminus [
B \cup 
\mathrm{Abs}_\pi( (A \cup B)^c )
] 
\mathpunct.
\mathbb{E}_\pi[W(S_1)\mid S_0=s] -W(s) < 0.
\end{align}
By hypothesis \ref{thm:v-streett-supermartingale-bscc-infty:hyp-invariant} \textbf{(\textit{invariant})} the set $I$ is absorbing,
moreover the set $\mathrm{Abs}_\pi( (A \cup B)^c )$ is also absorbing by construction (\Cref{def:largest-absorbing-subset}).
Therefore, $I \cap \mathrm{Abs}_\pi( (A \cup B)^c )$ is absorbing under the transition kernel.
We note that $r(s, a, s^\prime) = 1$ for every $s \in I \cap \mathrm{Abs}_\pi( (A \cup B)^c), 
    s^\prime \in S$ and $a \in \act$ by our definition of the reward function in hypothesis~\ref{thm:v-streett-supermartingale-bscc-infty:hyp-reward} \textbf{(\textit{reward})}.
    We can then apply \Cref{lem:v-non-increasing} with $U=I \cap \mathrm{Abs}_\pi( (A \cup B)^c )$
    to conclude that
\begin{align}
    \forall s \in I \cap \mathrm{Abs}_\pi( (A \cup B)^c)
    \mathpunct. 
    \mathbb{E}\!\big[V^\pi(S_1)\mid S_0=s\big] - V^\pi(s) = 0.
\end{align}
Substituting $W = C - V^\pi$ gives
\begin{align}\label{eqn:result-lem4}
    \forall s \in I \cap \mathrm{Abs}_\pi( (A \cup B)^c)
    \mathpunct. 
    \mathbb{E}\!\big[W(S_1)\mid S_0=s\big] - W(s)= 0.
\end{align}
Finally, to verify condition~\eqref{def:streett-supermartingale:eq-non-increasing}, we consider an arbitrary $s \in I \setminus  (A \cup B)$:
\begin{itemize}
    \item \textbf{Case} $s \in  \mathrm{Abs}_\pi( (A \cup B)^c )$. Since $s \in I \cap \mathrm{Abs}_\pi( (A \cup B)^c )$, condition \eqref{def:streett-supermartingale:eq-non-increasing} follows from \cref{eqn:result-lem4}.
    
    \item \textbf{Case} $s \notin 
    \mathrm{Abs}_\pi( (A \cup B)^c )
    $. %
    Since $s \in I \setminus  (A \cup B)$ and $s \notin 
    \mathrm{Abs}_\pi( (A \cup B)^c )
    $,
    condition \eqref{def:streett-supermartingale:eq-non-increasing} follows from \cref{eqn:notin-Abs-AuBc}.%
\qed
\end{itemize}

\end{proof}

Since any $\omega$-regular (and LTL) specification can be encoded in a DSA $\mathcal{A}$,
proof certificates for such properties can be obtained by applying
Theorem~\ref{thm:v-streett-supermartingale-bscc} to the product MDP
$\mathcal{M} \otimes \mathcal{A}$, where the Streett pairs are directly inherited  from the automaton.

\begin{corollary}
\label{cor:v-streett-supermartingale-product}
Let $\mathcal{M} \otimes \mathcal{A}_\varphi$ be a product MDP with a DSA,
with state space $S \times Q$ and lifted Streett pairs $\mathit{Acc}^\otimes$.
Let $\pi$ be a stationary policy and let
$(S \times A, S \times B) \in \mathit{Acc}^\otimes$ be a lifted Streett pair.
Let $I \subseteq S \times Q$ be a set. Assume that the hypotheses of
Theorem~\ref{thm:v-streett-supermartingale-bscc} hold for the product MDP
$\mathcal{M} \otimes \mathcal{A}_\varphi$ in place of $\mathcal{M}$, the lifted pair
$(S \times A, S \times B)$ in place of $(A, B)$, and the reward
\[
r^\otimes\bigl((s,q), a, (s',q')\bigr)
=
\mathbf{1}_{\{\,(s,q) \in (S \times B) \cup \mathrm{Abs}_\pi((S \times (A \cup B))^c)\,\}}.
\]
Then $W(s, q) := C - V^\pi(s, q)$, with $C := \tfrac{1}{1-\gamma}$,
is a Streett supermartingale for $(S \times A, S \times B)$ on
$\mathcal{M} \otimes \mathcal{A}_\varphi$.
\end{corollary}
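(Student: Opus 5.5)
The corollary is a direct instantiation of \Cref{thm:v-streett-supermartingale-bscc}, so the plan is to check that the product $\mathcal{M} \otimes \mathcal{A}_\varphi$ is an MDP in the sense of the paper's definition, and then apply the theorem verbatim.

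First, I will write the product explicitly. Its state space is $S \times Q$. Since $Q$ is finite with the discrete $\sigma$-algebra, $S \times Q$ is again countable, or a Borel subset of a Euclidean space identified with finitely many copies of $S$. The action space is $\act$. The kernel is
\[
P^\otimes(E \times \{q'\} \mid (s,q),a) = \int_E \mathbf{1}_{\{q' = \delta(q, L(s'))\}}\, P(\mathrm{d}s' \mid s,a),
\]
and the initial distribution is the push-forward of $\initdist$ under $s \mapsto (s, \delta(q_0, L(s)))$, or under $s \mapsto (s,q_0)$, depending on the product convention in use.

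The step needing care is measurability. It requires $L$ to be measurable, meaning each level set $\{s : L(s)=\sigma\}$ is measurable. I will take this as the standing assumption implicit in labelled MDPs. Then $(s,q,a)\mapsto P^\otimes(\cdot)$ is measurable, because $\delta$ is a function on a finite set. The lifted sets $S\times A$ and $S\times B$ are measurable, and the stationary policy $\pi$ is a kernel on $\act$ given $S\times Q$. So $P^\otimes$ is a stochastic kernel, and the product is an MDP to which the theorem applies. I expect this well-posedness check to be the only real obstacle, and it is a mild one.

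Next, I instantiate \Cref{thm:v-streett-supermartingale-bscc} with $\mathcal{M} \otimes \mathcal{A}_\varphi$ in place of $\mathcal{M}$ and $(S\times A, S\times B)$ in place of $(A,B)$. The remaining hypotheses are assumed to hold by the statement itself:
\begin{itemize}
\item $I$ is absorbing with full initial mass;
\item almost-sure reachability of $(S\times B)\cup \mathrm{Abs}_\pi((S\times(A\cup B))^c)$ from $I$;
\item the uniform bound on the expected hitting time from $(S\times(A\setminus B))\cap I$.
\end{itemize}
I will also note that $(S\times A)\cup(S\times B) = S\times(A\cup B)$. Hence the reward $r^\otimes$ coincides exactly with hypothesis~\ref{thm:v-streett-supermartingale-bscc-infty:hyp-reward} of the theorem for the lifted pair.

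The theorem then yields directly that $W = C - V^\pi$ is a Streett supermartingale for $(S\times A, S\times B)$ on the product. This concludes the proof.
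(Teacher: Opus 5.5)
Your proposal is correct and follows the paper's route: the corollary is a direct instantiation of Theorem~\ref{thm:v-streett-supermartingale-bscc} on the product MDP with the lifted pair, where $(S\times A)\cup(S\times B)=S\times(A\cup B)$ makes $r^\otimes$ exactly the theorem's reward. Your check that $\mathcal{M}\otimes\mathcal{A}_\varphi$ is a well-defined MDP (measurable labelling, measurable lifted sets and kernel) is left implicit in the paper but does not change the argument.
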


\begin{example}
\label{ex:bscc-reward}
The following example, presented in Abate~et~al.~\cite[Example 3]{abateStochasticOmegaRegularVerification2024},
considers a simple Markov process over one real-valued variable $x$ and control parameter $\kappa$,
described by the following stochastic difference equation:
\begin{equation}
  x_{t+1} = \kappa \cdot x_t + w_t,\qquad
  x_0 = 100,\qquad
  w_t \sim \mathrm{Uniform}(-0.1,0.1).
\end{equation}

Consider the stabilize-while-avoid property
$\Phi = (x \geq -1)\,\mathsf{U}\,\mathsf{G}(-1 \leq x \leq 1)$,
that requires the system to avoid $x < -1$ until it stabilizes within $(-1 \leq x \leq 1)$, which corresponds to the DSA in \Cref{fig:dsa-stabilize-avoid}, with a single Streett pair $(\{q_0, q_2\}, \emptyset)$.
The policy that assigns $\kappa = 0.5$ satisfies $\Phi$ almost surely and the relevant absorbing set established by \Cref{thm:v-streett-supermartingale-bscc} is
\(
\mathrm{Abs}(S\times\{q_1\}) = \{-1 \leq x < 1\}\times q_1
\).
Hence, the reward function  becomes 
\begin{equation*}
r\bigl((x,q), a, (x',q')\bigr)
=
\begin{cases}
1 & \text{if } -1 \le x < 1 \text{ and } q = q_1,\\
0 & \text{otherwise.}
\end{cases}
\end{equation*}

Consider the supporting invariant $I(x, q_0) = (
-0.2 \leq x \leq 100 )$,
$I(x, q_1) = (-0.2 \leq x \leq 0.9)$, and
$I(x, q_2) = \mathit{false}$.
Within this invariant, the hitting time of $\mathrm{Abs}(S \times \{q_1\})$ is at most $7$ for every initial state: 
from $x = 100$, exactly $\lceil \log_2(100) \rceil = 7$ steps are needed before $0.5^7 \cdot 100 + 0.2 < 1$, where $0.2$ bounds the worst-case accumulated stochastic noise. 
This uniform finite bound establishes both hypothesis~\ref{thm:v-streett-supermartingale-bscc-infty:hyp-reachability-1} 
(almost-sure reachability) and hypothesis~\ref{thm:v-streett-supermartingale-bscc-infty:hyp-reachability-1-strict} 
(uniform bound on hitting time).
The theorem yields the Streett supermartingale $W(s, q) := C - V^\pi(s, q)$,
with $C := \tfrac{1}{1-\gamma}$.
Inside the absorbing set ($q = q_1$), the process collects reward $1$ for an infinite number of steps,
achieving the maximum discounted return $V^\pi = C$, so $W = 0$ and remains $0$,
satisfying the non-increasing condition \eqref{def:streett-supermartingale:eq-non-increasing} in \Cref{def:streett-supermartingale}.
Outside the absorbing set ($q = q_0$), the reward is $0$ but the contraction
$\kappa = 0.5$ brings the process closer to the absorbing set at each step.
Since the same eventual return is discounted by one fewer factor of $\gamma$, $V^\pi$ strictly increases and $W$ strictly decreases.
The worst-case strict decrease occurs at $x = 100$, where the expected hitting time is $7$ and the one-step gain in $V^\pi$ is $\varepsilon = \gamma^6$,
satisfying the strictly decreasing condition \eqref{def:streett-supermartingale:eq-strictly-decreasing} in \Cref{def:streett-supermartingale}. \qed
%

\begin{figure}[t]
\centering
\begin{tikzpicture}[
    >=stealth,
    node distance=3.5cm,
    every state/.style={
    minimum size=0.8cm},
    every edge/.style={draw,
    ->},
    label/.style={font=\small},
    cert/.style={font=\small, text=blue!60!black}
]

\node[state, initial, initial text={}] (q0) {$q_0$};
\node[state, right of=q0] (q1) {$q_1$};
\node[state, right=2cm of q1] (q2) {$q_2$};

\path[->] (q0) edge[loop above] node[above] {$x \geq 1$} (q0);
\path[->] (q1) edge[loop above] node[above] {$-1 \leq x < 1$} (q1);
\path[->] (q2) edge[loop above] node[above] {$\mathrm{true}$} (q2);

\path[->] (q0) edge[bend left=15] node[above] {$-1 \leq x < 1$} (q1);
\path[->] (q1) edge[bend left=15] node[below] {$x \geq 1$} (q0);

\path[->] (q1) edge node[above] {$x < -1$} (q2);

\path[->] (q0) edge[bend right=35] node[below] {$x < -1$} (q2);

\node[cert, below=0.3cm] at (q0.south) {
};
\node[cert, below=0.3cm] at (q1.south) {
};
\node[cert, below=0.3cm] at (q2.south) {
};

\node[right=2em of q2, font=\normalsize] {$\mathrm{Acc} = \{(\{q_0, q_2\}, \emptyset)\}$};

\end{tikzpicture}
\caption{Deterministic Streett Automaton for $(x \geq -1) \mathsf{U} \mathsf{G}(-1 \leq x \leq 1)$.} 
\label{fig:dsa-stabilize-avoid}
\end{figure}    
\end{example}

As a special case of \Cref{thm:v-streett-supermartingale-bscc}, when the 
Streett pair takes the form $(S, T)$---so that the acceptance condition 
requires $T$ to be visited infinitely often---the theorem simplifies to 
the following.
Crucially, in this case the reward function requires no knowledge of the absorbing
sets under the policy, depending only on the target set $T$.
\begin{corollary}
\label{cor:v-recurrence-supermartingale}
Let $\mathcal{M} = (S, \act, P, r, \initdist)$ be an MDP,
let $\pi$ be a policy, let $I \subseteq S$, and let $T \subseteq S$ be a
measurable target set. Assume:
\begin{enumerate}
    \item \textbf{Invariant:}
    $I$  is absorbing and 
    $\initdist(I)=1$.
    \item \textbf{Almost-sure recurrence:}
    From every $s \in I$.
    \(
    \Pr_\pi\!\bigl(\tau_T(s) < \infty\bigr) = 1.
    \)
    \item \textbf{Uniform bound on hitting time:}
    There exists a constant $\bar{H} < \infty$ such that for all
    $s \in I \setminus T$,
    \(
    \mathbb{E}_\pi\!\bigl[\tau_T(s)\bigr] \le \bar{H}.
    \)
    \item \textbf{Reward:}
    \(
    r(s, a, s') = \mathbf{1}_{\{s \in T\}}.
    \)
\end{enumerate}
Then $W(s) := C - V^\pi(s)$, with $C := \tfrac{1}{1-\gamma}$, is a supermartingale for the Streett pair $(S,T)$, certifying the recurrence of $T$.
\end{corollary}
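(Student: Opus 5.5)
This corollary is a direct specialisation of \Cref{thm:v-streett-supermartingale-bscc} to the Streett pair $(A,B) = (S,T)$. The plan is to check that each hypothesis of the theorem reduces to the corresponding hypothesis of the corollary. The key observation is that with $A = S$ we have $(A\cup B)^c = (S \cup T)^c = \emptyset$. Hence $\mathrm{Abs}_\pi((A\cup B)^c) \subseteq \emptyset$ (recall $\mathrm{Abs}_\pi(X)\subseteq X$), so $\mathrm{Abs}_\pi((A\cup B)^c)=\emptyset$ and the rewarded set $B \cup \mathrm{Abs}_\pi((A\cup B)^c)$ collapses to $T$. This is why no knowledge of absorbing sets is needed here.

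The hypotheses then match one by one. The invariant hypothesis is identical to hypothesis~\ref{thm:v-streett-supermartingale-bscc-infty:hyp-invariant}. The almost-sure recurrence hypothesis states $\Pr_\pi(\tau_T<\infty \mid S_0=s)=1$ for all $s\in I$, which is exactly hypothesis~\ref{thm:v-streett-supermartingale-bscc-infty:hyp-reachability-1} with $U=T$. For the uniform hitting-time bound, note that $(A\setminus B)\cap I = I\setminus T$, so hypothesis~\ref{thm:v-streett-supermartingale-bscc-infty:hyp-reachability-1-strict} asks for $\mathbb{E}_\pi[\tau_T\mid S_0=s]\le\bar H$ on $I\setminus T$, which is precisely what is assumed. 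Finally, the reward $\mathbf{1}_{\{s\in T\}}$ coincides with the reward of hypothesis~\ref{thm:v-streett-supermartingale-bscc-infty:hyp-reward} once the absorbing part is empty. Applying the theorem, $W = C - V^\pi$ is a Streett supermartingale for $(S,T)$.

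It remains to interpret the conclusion. For the pair $(S,T)$, the certified property $\sum_t \mathbf{1}_S(S_t)=\infty \implies \sum_t \mathbf{1}_T(S_t)=\infty$ has an antecedent that always holds, so it says that $T$ is visited infinitely often almost surely, i.e.\ recurrence. Condition~\eqref{def:streett-supermartingale:eq-non-increasing} is vacuous, since $I\setminus(S\cup T)=\emptyset$.

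The only mildly delicate step is the claim $\mathrm{Abs}_\pi(\emptyset)=\emptyset$. It holds because the defining event $\{\forall t: S_t\in\emptyset\}$ has probability $0$, never $1$. A second point to check is that the corollary's notation $\tau_T(s)$ is read as $\tau_T$ under $S_0=s$. Beyond these, the argument is bookkeeping.
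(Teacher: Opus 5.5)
Your proposal is correct and matches the paper's approach: the paper states this corollary without a separate proof, as the specialisation of \Cref{thm:v-streett-supermartingale-bscc} to the pair $(S,T)$. There, $(A\cup B)^c=\emptyset$ gives $\mathrm{Abs}_\pi(\emptyset)=\emptyset$, so the rewarded set collapses to $T$ and, with $(A\setminus B)\cap I = I\setminus T$, every hypothesis matches verbatim. Your observations that condition~\eqref{def:streett-supermartingale:eq-non-increasing} is vacuous and that the Streett property for $(S,T)$ reduces to almost-sure infinitely many visits to $T$ account for the ``certifying the recurrence'' part of the statement.
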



We now turn to our second reward design, which requires only knowledge 
of the specification and does not rely on absorbing sets. However, it 
imposes an alternative assumption on the satisfying policy, expressed 
in terms of the following random variable.
\begin{definition}[{\cite[Def.~3.1]{kuraHierarchySupermartingalesoRegular2025}}]
For measurable sets $A, B \subseteq S$, we define a random variable $\mathrm{step}^{(A,B)} : S^\omega \to \mathbb{N}$ as the number of $A$-steps until reaching $B$. For a run $\rho = (s_0, s_1, s_2, \ldots) \in S^\omega$,
\[
\mathrm{step}^{(A,B)}(\rho) := \left|\left\{i \mid 0 \leq i < \min\{j \mid s_j \in B \land j \geq 0\} \land s_i \in A\right\}\right|
\]
\end{definition}

The following random variable, representing the number of visits to a 
set $U$ before time $t$, is needed in the proof of our second reward design.
\begin{definition}
For a measurable set $U \subseteq S$ and $t \in \mathbb{N}$, the random variable $N_t^{U} : S^\omega \to \mathbb{N}$ is defined as
\[
N_t^{U}(\rho)
:=
\left|
\left\{
i \;\middle|\;
0 \le i < t
\;\land\;
s_i \in U
\right\}
\right|,
\]
where $\rho = (s_0, s_1, s_2, \ldots) \in S^\omega$.
\end{definition}

The following auxiliary lemma specifies our second reward design and establishes a uniform lower bound on the resulting value function.

\begin{restatable}{lemma}{lemmaSpecRewardLowerBound}
\label{lem:v-streett-sm-spec-reward:lower-bound}
Let $\mathcal{M} = (S, \act, P, r, \initdist)$ be an MDP, \(\pi\) a policy, 
\(I \subseteq S\) an absorbing set with $\mu(I) = 1$, and \((A,B)\) a Streett pair
with $A,B \subseteq S$. If there exists a constant $\bar{H} < \infty$ such that
\(
\forall s \in I:
\mathbb{E}_{\pi}\left[\mathrm{step}^{(A,B)} \mid S_0 = s\right] \leq \bar{H},
\)
the state-dependent discount factor is
\[
\Gamma(s)=
\begin{cases}
\gamma, & r(s,a,s')\neq 0,\\[2pt]
1, & \text{otherwise},
\end{cases}
\qquad 0<\gamma<1,
\]
and the reward function is
\[
r(s,a,s') = \mathit{r_B}\,\mathbf{1}_{\{s\in B\}}
- \mathit{r_{A\setminus B}}\,\mathbf{1}_{\{s\in A\setminus B\}},
\]
where $\mathit{r_B}, \mathit{r_{A\setminus B}} > 0$ and
\(
\frac{r_B}{r_{A\setminus B}} \geq 
\frac{1}{1-\gamma}
\left(
\frac{1}{\gamma^{\bar{H}+1}}
- 1
\right),
\)
then 
%
for all $s \in I$,
\[
V^{\pi}(s) \;\ge\; -\, r_{A\setminus B}\,\frac{1 - \gamma^{\bar{H}}}{1 - \gamma}.
\]
\end{restatable}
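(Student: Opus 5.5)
The plan is to decompose every run at its first visit to $B$ and use the strong Markov property there. Before that visit the only nonzero rewards come from $A\setminus B$. After it, the value function restarts at the next state. Write $N := \mathrm{step}^{(A,B)}$ for the number of $A$-steps before $B$, set $r := r_{A\setminus B}$, and let $L := -r\,\frac{1-\gamma^{\bar H}}{1-\gamma}$ denote the target bound. Under the state-dependent discount $\Gamma$, the discount is multiplied by $\gamma$ exactly at rewarded steps, i.e.\ at visits to $A\cup B$. Hence the $k$-th rewarded visit (counting from $0$) carries weight $\gamma^{k}$. Before $\tau_B$, every rewarded visit lies in $A\setminus B$, so the pre-$B$ contribution is pathwise exactly $-r\sum_{k=0}^{N-1}\gamma^k = -r\frac{1-\gamma^N}{1-\gamma}$. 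The hypothesis $\mathbb{E}_\pi[N\mid S_0=s]\le \bar H$ on $I$ gives $N<\infty$ almost surely.

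On $\{\tau_B<\infty\}$, the visit at $\tau_B$ contributes $\gamma^N r_B$. By the strong Markov property at $\tau_B+1$, and since $I$ is absorbing so that $S_{\tau_B+1}\in I$, the remaining return has conditional expectation $\gamma^{N+1}V^\pi(S_{\tau_B+1})$. On $\{\tau_B=\infty\}$ there is no further reward. This gives the identity
\[
V^\pi(s) = -\tfrac{r}{1-\gamma} + \mathbb{E}_\pi\Bigl[\gamma^N\Bigl(\tfrac{r}{1-\gamma} + \mathbf{1}_{\{\tau_B<\infty\}}\bigl(r_B+\gamma V^\pi(S_{\tau_B+1})\bigr)\Bigr)\Bigm| S_0=s\Bigr].
\]
The key algebraic step is to show that the bracket $r_B+\gamma V$ is nonnegative whenever $V\ge L$. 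The ratio hypothesis yields
\[
r_B+\gamma L \ge \tfrac{r}{1-\gamma}\bigl(\gamma^{-\bar H-1}-1-\gamma+\gamma^{\bar H+1}\bigr)\ge 0,
\]
because $x+1/x\ge 2\ge 1+\gamma$ for $x=\gamma^{\bar H+1}$. Granting this, the integrand is at least $\gamma^N\frac{r}{1-\gamma}$. Jensen's inequality for the convex map $n\mapsto\gamma^n$ gives $\mathbb{E}[\gamma^N]\ge\gamma^{\mathbb{E}N}\ge\gamma^{\bar H}$, and therefore $V^\pi(s)\ge -\frac{r}{1-\gamma}+\gamma^{\bar H}\frac{r}{1-\gamma}=L$.

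The main obstacle is circularity: the argument uses $V\ge L$ at $S_{\tau_B+1}$ in order to prove $V\ge L$. I would break it with a truncation induction. Let $V_n(s)$ be the expected discounted return obtained by stopping the reward accumulation right after the $n$-th visit to $B$. Setting $V_0 := L$ is not literally a return, so instead take $V_0$ to be the pre-$B$ part together with the worst-case tail. The same decomposition shows that $V_{n+1}$ equals the displayed expression with $V_n$ in place of $V^\pi$, and the computation above propagates $V_n\ge L \Rightarrow V_{n+1}\ge L$ on $I$. Because all rewards are bounded and every tail term after the $n$-th $B$-visit carries a factor of at most $\gamma^{n}$, $V_n\to V^\pi$ pointwise by dominated convergence. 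Hence $V^\pi\ge L$ on $I$.

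An alternative that avoids defining $V_n$ is to set $m:=\inf_I V^\pi$, which is finite since $V^\pi\ge -r/(1-\gamma)$. The identity then gives $m\ge -\frac{r}{1-\gamma}+\gamma^{\bar H}\bigl(\frac{r}{1-\gamma}+\min(0,r_B+\gamma m)\bigr)$. If $r_B+\gamma m\ge 0$, this yields $m\ge L$ directly. Otherwise it gives $m\ge -\frac{r}{1-\gamma}+\gamma^{\bar H}\frac{r}{1-\gamma}+\gamma^{\bar H}(r_B+\gamma m)$; solving this linear inequality for $m$ and invoking the ratio condition shows $r_B+\gamma m\ge 0$, a contradiction. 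I would try this fixed-point version first, falling back to the truncation induction if the case analysis gets messy.
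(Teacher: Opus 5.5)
Your proof is correct, and it takes a genuinely different route from the paper's.

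\textbf{The paper's route.} It decomposes the whole run into all inter-$B$ segments, splitting on the events $E_1$ ($A$ and $B$ visited infinitely often) and $E_2$ ($A$ visited finitely often). Each segment is shown, via Jensen and the ratio condition, to contribute nonnegatively. The penalty accumulated after the last $B$-visit is then bounded by $-r_{A\setminus B}\frac{1-\gamma^{\bar H}}{1-\gamma}$.

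\textbf{Your route.} You renew once, at the stopping time $\tau_B+1$, and obtain the exact identity $V^\pi=TV^\pi$ on $I$. Here $T$ is your displayed right-hand side with a bounded $f$ in place of $V^\pi$. This $T$ is monotone and a $\gamma$-contraction in the sup norm, since the continuation term carries the factor $\gamma^{N+1}\le\gamma$. The lemma then reduces to the subsolution property $TL\ge L$, which is exactly your computation $r_B+\gamma L\ge 0$ combined with Jensen.

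\textbf{What your route buys.} It gains both rigor and generality.
\begin{itemize}
\item The strong Markov property is used only at a genuine stopping time. The paper's $E_2$ case invokes it at $\ell$, the last $B$-visit before the last $A$-visit, which is not a stopping time. Both of the paper's cases also use it to factor expectations as if independence held.
\item The case $\tau_B=\infty$ is absorbed by the indicator rather than needing a separate argument.
\item The ratio hypothesis enters only through $r_B\ge\gamma\,r_{A\setminus B}\frac{1-\gamma^{\bar H}}{1-\gamma}$. This is strictly weaker than the paper's condition. It is already sharp for the deterministic cycle $a\to b\to a$ with $a\in A\setminus B$, $b\in B$, $\bar H=1$, where $V^\pi(a)\ge -r_{A\setminus B}$ holds if and only if $r_B\ge\gamma r_{A\setminus B}$.
\end{itemize}
What the paper's decomposition offers in exchange is a per-cycle interpretation of the reward ratio.

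\textbf{Two small fixes.}
\begin{itemize}
\item In the infimum version, passing from $\mathbb{E}[\gamma^N]\,c$ to $\gamma^{\bar H}c$, with $c=\frac{r}{1-\gamma}+\min(0,r_B+\gamma m)$, requires $c\ge 0$. This holds because $m\ge-\frac{r}{1-\gamma}$ gives $c\ge\min\bigl(\frac{r}{1-\gamma},\,r+r_B\bigr)>0$.
\item In the truncation version, no special $V_0$ is needed. Starting from $V_0\equiv 0\ge L$ produces exactly the returns truncated after the $n$-th $B$-visit. Equivalently, $T^nL$ is nondecreasing and converges uniformly to $V^\pi$.
\end{itemize}
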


The lower bound follows from two cases: when $A$ is visited infinitely often, the ratio condition on $r_B / r_{A\setminus B}$ ensures that the positive reward from visiting $B$ compensates the accumulated negative reward from $A\setminus B$ between consecutive visits to $B$; when $A$ is visited only finitely often, the bound $\bar{H}$ on the expected number of visits to $A$ before the next visit to $B$ controls the accumulated penalty in the tail, where no further visits to $B$ occur.


We now state our second main result, which, unlike the first, does not rely on absorbing sets.

\begin{theorem}
\label{thm:v-streett-sm-spec-reward}
Let 
\(\mathcal{M} = (S, \mathcal{A}, P, r, \initdist)\)
be an MDP,
let \(\pi\) be a policy,
\(I \subseteq S\) be a set,
and \((A,B)\) a Streett pair
with $A,B \subseteq S$.
Assuming:
\begin{enumerate}
    \item \textbf{Invariant:}
    \label{thm:v-streett-sm-spec-reward:hyp-invariant}
        \[
    I \textit{ is absorbing and }
    \initdist(I)=1,
    \]
    \item \textbf{Positive recurrence bound:}
    \label{thm:v-streett-sm-spec-reward:hyp-recurrence}
    There exists a constant $\bar{H} < \infty$ such that
    \[
    \forall s \in I:
    \mathbb{E}_{\pi}\left[\mathrm{step}^{(A,B)} \mid S_0 = s\right] \leq \bar{H},
    \]
    \item \textbf{State-dependent discount factor:}
    \label{thm:v-streett-sm-spec-reward:hyp-discount}
    \[
    \Gamma(s)=
    \begin{cases}
    \gamma, & r(s,a,s')\neq 0,\\[2pt]
    1, & \text{otherwise},
    \end{cases}
    \qquad 0<\gamma<1,
    \]
    \item \textbf{Reward:}
    \label{thm:v-streett-sm-spec-reward:hyp-reward}
    \[
    r(s,a,s') = \mathit{r_B}\,\mathbf{1}_{\{s\in B\}}
    - \mathit{r_{A\setminus B}}\,\mathbf{1}_{\{s\in A\setminus B\}},
    \]
    with $\mathit{r_B}, \mathit{r_{A\setminus B}} > 0$ and
    \(
    \frac{r_B}{r_{A\setminus B}} \geq 
    \frac{1}{1-\gamma}
    \left(
    \frac{1}{\gamma^{\bar{H}+1}}
    - 1
    \right).
    \)
\end{enumerate}
Then \(W(s) := C - V^{\pi}(s)\), with \(C := \tfrac{\mathit{r_B}}{1-\gamma}\), is a Streett supermartingale for \((A,B)\).
\end{theorem}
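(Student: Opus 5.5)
Conditions~\eqref{def:streett-supermartingale:eq-invariant-init} and~\eqref{def:streett-supermartingale:eq-invariance} are exactly hypothesis~\ref{thm:v-streett-sm-spec-reward:hyp-invariant}. The plan is therefore to check three things for $W = C - V^\pi$: nonnegativity, the strict decrease on $(A\setminus B)\cap I$, and the non-increase on $I\setminus(A\cup B)$. For the last two I would not use hitting-time decompositions. Instead I would read them off the Bellman equation with state-dependent discount from \Cref{def:val-func}, specialised to the three regions where the reward is constant.

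\emph{Nonnegativity.} Discounting by $\gamma$ happens only at rewarded steps, and the only positive reward is $r_B$. So along any run the return is at most $r_B\sum_{k\ge 0}\gamma^k = r_B/(1-\gamma) = C$. Hence $V^\pi \le C$, which gives $W\ge 0$. I would justify the well-definedness of $V^\pi$ at the same time. The positive part of the return is bounded by $C$. For the negative part, Lemma~\ref{lem:v-streett-sm-spec-reward:lower-bound} gives a finite lower bound on $I$.

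\emph{Region $I\setminus(A\cup B)$.} Here $r=0$ and $\Gamma(s)=1$, so the Bellman equation reads $V^\pi(s)=\mathbb{E}_\pi[V^\pi(S_1)\mid S_0=s]$. This yields equality in~\eqref{def:streett-supermartingale:eq-non-increasing}.

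\emph{Region $(A\setminus B)\cap I$.} Here $r=-r_{A\setminus B}$ and $\Gamma(s)=\gamma$, so
\[
V^\pi(s) = -r_{A\setminus B} + \gamma\,\mathbb{E}_\pi[V^\pi(S_1)\mid S_0=s].
\]
Writing $m:=\mathbb{E}_\pi[V^\pi(S_1)\mid S_0=s]$, the one-step change is $m - V^\pi(s) = r_{A\setminus B} + (1-\gamma)m$. Since $I$ is absorbing, $S_1\in I$ almost surely, so Lemma~\ref{lem:v-streett-sm-spec-reward:lower-bound} gives $m\ge -r_{A\setminus B}(1-\gamma^{\bar H})/(1-\gamma)$. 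Hence
\[
m - V^\pi(s) \;\ge\; r_{A\setminus B} - r_{A\setminus B}(1-\gamma^{\bar H}) = r_{A\setminus B}\gamma^{\bar H} =: \varepsilon > 0.
\]
With $W=C-V^\pi$ this is exactly condition~\eqref{def:streett-supermartingale:eq-strictly-decreasing}.

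\emph{Main obstacle.} The difficulty is entirely inside Lemma~\ref{lem:v-streett-sm-spec-reward:lower-bound}, which I am taking as given. What remains is to make sure the Bellman equation with state-dependent discount holds as an identity of finite quantities on $I$. That needs integrability of $V^\pi(S_1)$, which follows from $-r_{A\setminus B}/(1-\gamma)\le V^\pi\le C$ on $I$. It also needs the interchange of expectation and infinite sum in the definition of $V^\pi$ under $\Gamma$. I would handle the interchange by splitting the return into its positive and negative parts and applying monotone convergence to each. Both parts have finite expectation: the positive part is at most $C$, and the negative part is bounded through the effective discount $\gamma^{N^{A\cup B}_t}$.
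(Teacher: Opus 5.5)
Your proposal is correct and follows the paper's proof essentially step for step. The steps are: the bound $V^\pi \le r_B/(1-\gamma)$ for nonnegativity; the Bellman identity with $\Gamma=1$, which gives equality on $I\setminus(A\cup B)$; and the Bellman identity with $\Gamma=\gamma$ on $(A\setminus B)\cap I$, combined with \Cref{lem:v-streett-sm-spec-reward:lower-bound}, which gives $\varepsilon = r_{A\setminus B}\gamma^{\bar H}$. Your explicit remark that $S_1\in I$ almost surely, so that the lemma applies to $S_1$, fills in a step the paper leaves implicit.
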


The state-dependent discount factor leaves transitions outside $A \cup B$ undiscounted, satisfying the supermartingale non-increasing condition~\eqref{def:streett-supermartingale:eq-non-increasing} with equality. On $A \setminus B$, since the policy almost surely eventually leaves this set,
$V^\pi$ increases in expectation: each step contributes one fewer negative
reward and one fewer discounting factor ($\Gamma = \gamma$) to the sum.
However, establishing a uniform $\varepsilon > 0$ for condition~\eqref{def:streett-supermartingale:eq-strictly-decreasing} requires controlling how negative $V^\pi$ can become. 
Hypothesis~\ref{thm:v-streett-sm-spec-reward:hyp-recurrence} (\textit{positive recurrence bound}) plays this role: $\bar{H}$ limits the accumulated negative reward, while the ratio condition on $r_B / r_{A\setminus B}$ ensures sufficient compensation,
as guaranteed by \cref{lem:v-streett-sm-spec-reward:lower-bound}%
. This hypothesis implies almost-sure Streett satisfaction within $I$, and is a uniform strengthening of the condition $\mathbb{E}_\pi[\mathrm{step}^{(A,B)} \mid S_0 = s] < \infty$, which was shown to be necessary and sufficient for the existence of a Streett supermartingale~\cite{kuraHierarchySupermartingalesoRegular2025}.



\begin{proof}[\Cref{thm:v-streett-sm-spec-reward}]
We need to show that $W(s) = C - V^\pi(s)$ is a Streett supermartingale (\Cref{def:streett-supermartingale}) for the Streett pair $(A, B)$.
Conditions~\eqref{def:streett-supermartingale:eq-invariant-init} and~\eqref{def:streett-supermartingale:eq-invariance} are identical to hypothesis~\ref{thm:v-streett-sm-spec-reward:hyp-invariant}, so we have to verify that $W(s) := C - V^\pi(s)$ is nonnegative and satisfies conditions~\eqref{def:streett-supermartingale:eq-strictly-decreasing} and~\eqref{def:streett-supermartingale:eq-non-increasing}.

\paragraph{Nonnegativity.}
The reward and state-dependent discount factor 
(hypothesis~\ref{thm:v-streett-sm-spec-reward:hyp-reward} \textbf{(\textit{reward})} and hypothesis~\ref{thm:v-streett-sm-spec-reward:hyp-discount}) 
induce (by \Cref{def:val-func}) the value function:
\begin{align}
V^{\pi}(s)
=
\mathbb{E}_\pi\!\left[
  \sum_{t=0}^\infty \gamma^{N_t^{(A\cup B)}}\,
  \bigl(\mathit{r_B}\,\mathbf{1}_{\{S_t\in B\}}
    - \mathit{r_{A\setminus B}}\,\mathbf{1}_{\{S_t\in A\setminus B\}}\bigr)
  \,\middle|\, S_0 = s
\right],
\end{align}
Since the reward is at most $\mathit{r_B}$ at every step,
it follows that $V^{\pi}(s) \le \mathit{r_B}\sum_{t=0}^\infty \gamma^t = C := \frac{\mathit{r_B}}{1-\gamma}$,
so $W(s) = C - V^{\pi}(s) \ge 0$ for all $s \in I$.

\paragraph{Condition~\eqref{def:streett-supermartingale:eq-strictly-decreasing}.}
Fix $s \in (A \setminus B) \cap I$. Since $s \in A \setminus B$, the immediate reward is $-r_{A\setminus B}$, so the Bellman equation (\Cref{def:val-func}) gives
\begin{align}
V^{\pi}(s)
=
-r_{A\setminus B}
+
\gamma\,\mathbb{E}_{\pi}[V^{\pi}(S_1)\mid S_0=s],
\end{align}
and therefore
\begin{align}\label{eqn:bellman-expansion-spec-reward}
\mathbb{E}_{\pi}[V^{\pi}(S_1)\mid S_0=s]
-
V^{\pi}(s)
=
r_{A\setminus B}
+
\mathbb{E}_{\pi}[(1-\gamma)\,V^{\pi}(S_1)\mid S_0=s].
\end{align}
\Cref{lem:v-streett-sm-spec-reward:lower-bound} lets us obtain the worst-case bound
\begin{align}\label{eqn:worst-case-bound-spec-reward}
\mathbb{E}_{\pi}[(1-\gamma)\,V^{\pi}(S_1)\mid S_0=s]
\;\ge\;
-(1-\gamma)\,r_{A\setminus B}\,\frac{1-\gamma^{\bar{H}}}{1-\gamma}
=
-r_{A\setminus B}\bigl(1-\gamma^{\bar{H}}\bigr).
\end{align}
Combining \eqref{eqn:bellman-expansion-spec-reward} and \eqref{eqn:worst-case-bound-spec-reward},
\begin{align}
\mathbb{E}_{\pi}[V^{\pi}(S_1)\mid S_0=s] - V^{\pi}(s)
\;\ge\;
r_{A\setminus B} - r_{A\setminus B}\bigl(1-\gamma^{\bar{H}}\bigr)
=
r_{A\setminus B}\,\gamma^{\bar{H}}
=:
\varepsilon.
\end{align}
Since $r_{A\setminus B} > 0$ and $\gamma^{\bar{H}} > 0$, we have $\varepsilon > 0$.
Substituting $W = C - V^\pi$ gives
\begin{align}
\mathbb{E}_\pi[W(S_1)\mid S_0=s]
\;\le\;
W(s) - \varepsilon,
\end{align}
which establishes condition~\eqref{def:streett-supermartingale:eq-strictly-decreasing}.

\paragraph{Condition~\eqref{def:streett-supermartingale:eq-non-increasing}.}
Fix $s \in I \setminus (A \cup B)$. For such $s$, the immediate reward is $0$ and $\Gamma(s) = 1$,
so the Bellman equation (\Cref{def:val-func}) gives
\begin{align}
V^{\pi}(s)
=
\mathbb{E}_{\pi}[\Gamma(s)\,V^{\pi}(S_1)\mid S_0=s]
=
\mathbb{E}_{\pi}[V^{\pi}(S_1)\mid S_0=s],
\end{align}
and therefore $\mathbb{E}_{\pi}[W(S_1)\mid S_0=s] = W(s)$,
which establishes condition~\eqref{def:streett-supermartingale:eq-non-increasing}.\qed
\end{proof}

Proof certificates for any $\omega$-regular (and LTL) specification $\varphi$ can be obtained by applying
Theorem~\ref{thm:v-streett-sm-spec-reward} to the product MDP
$\mathcal{M} \otimes \mathcal{A}_\varphi$, where the Streett pairs are directly inherited from the automaton.

\begin{corollary}
\label{cor:v-streett-sm-spec-reward-product}
Let $\mathcal{M} \otimes \mathcal{A}_\varphi$ be a product MDP with a DSA,
with state space $S \times Q$ and lifted Streett pairs $\mathit{Acc}^\otimes$.
Let $\pi$ be a stationary policy and let
$(S \times A, S \times B) \in \mathit{Acc}^\otimes$ be a lifted Streett pair.
Let $I \subseteq S \times Q$ be a supporting invariant. Assume that the hypotheses of
Theorem~\ref{thm:v-streett-sm-spec-reward} hold for the product MDP
$\mathcal{M} \otimes \mathcal{A}_\varphi$ in place of $\mathcal{M}$, the lifted pair
$(S \times A, S \times B)$ in place of $(A, B)$, and the reward
\[
r^\otimes\bigl((s,q), a, (s',q')\bigr)
= \mathit{r_B}\,\mathbf{1}_{\{(s,q)\in S \times B\}}
- \mathit{r_{A\setminus B}}\,\mathbf{1}_{\{(s,q)\in (S \times A)\setminus (S \times B)\}}.
\]
Then $W(s,q) := C - V^\pi(s,q)$, with $C := \tfrac{\mathit{r_B}}{1-\gamma}$,
is a Streett supermartingale for $(S \times A, S \times B)$ on
$\mathcal{M} \otimes \mathcal{A}_\varphi$.
\end{corollary}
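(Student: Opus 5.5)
This corollary is a direct instantiation of \Cref{thm:v-streett-sm-spec-reward}, so the plan is to check that the product MDP $\mathcal{M} \otimes \mathcal{A}_\varphi$ is itself an MDP in the sense of the definition. Then we match each hypothesis and read off the conclusion. First, the product state space $S \times Q$ is a measurable space when equipped with the product $\sigma$-algebra of $\mathcal{B}(S)$ and the discrete $\sigma$-algebra on the finite set $Q$. Second, the product kernel $P^\otimes\bigl((s',q') \mid (s,q),a\bigr)$ is $P(s' \mid s,a)$ together with the deterministic automaton update $q' = \delta(q, L(\cdot))$. This is a stochastic kernel on $S \times Q$ given $(S \times Q) \times \act$, because $\delta$ is a function on a finite set and $L$ takes finitely many values; the only requirement is that $L$ be measurable, which I would assume as part of the labelled-MDP setup. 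The stationary policy $\pi$ on the product is a kernel on $\act$ given $S \times Q$, so $\Pr_\pi$ on $(S\times Q)^\omega$ is defined exactly as before.

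Next, I verify that $(S\times A, S\times B)$ consists of measurable subsets of $S \times Q$. This holds because $A, B \subseteq Q$ are finite sets, so $S\times A$ and $S\times B$ are finite unions of measurable cylinders. I then observe that $(S\times A)\setminus(S\times B) = S \times (A\setminus B)$. This makes $r^\otimes$ literally the reward of hypothesis~\ref{thm:v-streett-sm-spec-reward:hyp-reward} with the pair $(S\times A, S\times B)$. The ratio condition on $r_B/r_{A\setminus B}$, the state-dependent discount $\Gamma$, the invariant property of $I$ (absorbing, $\initdist^\otimes(I)=1$), and the positive recurrence bound on $\mathbb{E}_\pi[\mathrm{step}^{(S\times A,S\times B)}]$ are all assumed in the corollary's hypotheses for the product.

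With every hypothesis of \Cref{thm:v-streett-sm-spec-reward} satisfied by the product MDP, the pair $(S\times A, S\times B)$, and $r^\otimes$, the theorem yields directly that $W = C - V^\pi$ with $C = r_B/(1-\gamma)$ is a Streett supermartingale for that pair. The theorem's proof uses only the Bellman equation, \Cref{lem:v-streett-sm-spec-reward:lower-bound}, and generic Markov-chain facts, none of which depends on the specific structure of $S$. The only potential obstacle is therefore the bookkeeping of measurability and of identifying the product reward with the theorem's reward. I expect this to be routine once the product kernel is written out explicitly.
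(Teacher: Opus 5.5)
Your proposal is correct and follows the paper's route: the corollary is a direct instantiation of Theorem~\ref{thm:v-streett-sm-spec-reward} on the product MDP, and $r^\otimes$ is exactly the theorem's reward for the lifted pair, since $(S\times A)\setminus(S\times B) = S\times(A\setminus B)$. One small addition to your bookkeeping: $S\times Q$ must satisfy the paper's MDP definition, not merely be a measurable space, and it does, being countable when $S$ is, or a Borel subset of $\mathbb{R}^{n+1}$ after identifying $Q$ with $\{1,\dots,|Q|\}$.
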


\begin{example}
Consider again the controlled process presented in \Cref{ex:bscc-reward} and its supporting invariant.
Since $B = \emptyset$, the reward function from \Cref{thm:v-streett-sm-spec-reward} becomes
\begin{equation*}
r\bigl((x,q), a, (x',q')\bigr)
=
\begin{cases}
-r_{A\setminus B} & \text{if } q \in \{q_0, q_2\},\\
0 & \text{otherwise,}
\end{cases}
\end{equation*}
and the state-dependent discount factor is $\Gamma = \gamma$ when $q \in \{q_0, q_2\}$
and $\Gamma = 1$ when $q = q_1$.
Within the supporting invariant, the expected number of steps in $A \cup B = \{q_0, q_2\}$
before reaching $q_1$ is at most $7$, as explained in \Cref{ex:bscc-reward}.
This finite bound ($\bar{H} = 7$) establishes the positive recurrence hypothesis
of \Cref{thm:v-streett-sm-spec-reward}.
The values $r_B = 10$, $r_{A\setminus B} = 1$, and $\gamma = 0.99$ 
satisfy the required lower bound on $\frac{r_B}{r_{A\setminus B}}$.
The theorem yields the Streett supermartingale $W(s, q) := C - V^\pi(s, q)$,
with $C := \tfrac{r_B}{1-\gamma}$.
Outside $A \cup B$ ($q = q_1$), the reward is $0$ and $\Gamma = 1$,
so $V^\pi$ is unchanged in expectation and $W$ remains constant, satisfying the non-increasing condition \eqref{def:streett-supermartingale:eq-non-increasing} in \Cref{def:streett-supermartingale}.
Inside $A \setminus B$ ($q = q_0$), each step incurs the penalty $-r_{A\setminus B}$
and removes it from the discounted return, strictly increasing $V^\pi$.
The accumulated penalties are bounded by the finite hitting time,
ensuring $W$ strictly decreases with $\varepsilon = r_{A\setminus B}\,\gamma^{7}$, satisfying the strictly decreasing condition \eqref{def:streett-supermartingale:eq-strictly-decreasing} in \Cref{def:streett-supermartingale}. \qed 
\end{example}

Together, \Cref{thm:v-streett-supermartingale-bscc,thm:v-streett-sm-spec-reward} 
establish that, under either reward design and appropriate assumptions on 
the policy, the value function of a policy that almost surely satisfies 
an $\omega$-regular specification encodes a valid Streett supermartingale 
certificate. The tradeoffs between the two designs are discussed in 
\Cref{sec:conclusions}.

\section{Experimental Validation}
\label{sec:experimental-validation}



In this section, we validate \Cref{thm:v-streett-supermartingale-bscc,thm:v-streett-sm-spec-reward} 
on finite MDPs, where the transition kernel $P$ and reward function $r$ are 
fully known, allowing the exact computation of value functions. For each of 
a diverse set of LTL specifications, 
we synthesize proof certificates using both reward designs, verify them by 
checking the supermartingale proof rules, and confirm satisfaction probabilities 
independently via the PRISM model checker~\cite{kwiatkowskaPRISM40Verification2011}.
The code to reproduce our experiments is publicly available\footnote{https://github.com/danielcontro/v-ssm}. 
Experimental validation in infinite and continuous state spaces, where the value function must be approximated using parametric function classes such as neural networks, is left for future work.

\paragraph{MDP description.}
We consider a grid-based MDP over a finite grid with walls between certain pairs of adjacent cells, whose state space is $S = \bigl\{0,\ldots,19\bigr\}^2$.
At each state there are five available actions
$\act = \{\mathsf{up},\mathsf{down},\mathsf{left},\mathsf{right}, \mathsf{stay}\}$.
The transition kernel $P$ exhibits stochastic dynamics: executing a directional 
action $a \in \{\mathsf{up},\mathsf{down},\mathsf{left},\mathsf{right}\}$ moves 
to the intended successor with probability $2/3$, and to one of the two 
perpendicular successors with probability $1/6$ each. The $\mathsf{stay}$ action 
is deterministic and keeps the agent in its current cell. If the intended successor 
of a directional action is outside the grid or separated from $s$ by a wall, 
the agent stays in place instead.
Formally, for $s \in S$ and $a \in \act$,
\[
P(s' \mid s,a)=
\begin{cases}
1, & \text{if } a = \mathsf{stay} \text{ and } s = s', \\[2pt]
\frac{2}{3}, & \text{if } a \neq \mathsf{stay} \text{ and } s'=\mathit{next}(s,a), \\[2pt]
\frac{1}{6}, & \text{if } a \neq \mathsf{stay} \text{ and } s'=\mathit{next}(s,a_{\perp_1}) \text{ or } s'=\mathit{next}(s,a_{\perp_2}), \\[2pt]
0, & \text{otherwise}, 
\end{cases}
\]
where $\mathit{next}(s,a)$ denotes the neighbouring cell of $s$ in 
direction $a$, or $s$ itself if that cell is outside the grid or 
separated from $s$ by a wall; and $a_{\perp_1}, a_{\perp_2}$ denote 
the two actions perpendicular to $a$.
States are annotated with atomic propositions from 
$\mathrm{AP} = \{\mathsf{h}, \mathsf{a}, \mathsf{b}, \mathsf{c}, \mathsf{d}, \mathsf{e}\}$ 
via a labelling function $L : S \to 2^{\mathrm{AP}}$,
where $\mathsf{h}$ labels unsafe states (``holes'') that some specifications 
require to avoid, and $\mathsf{a}, \mathsf{b}, \mathsf{c}, \mathsf{d}, \mathsf{e}$ 
denote zones of interest appearing in the specifications.
A visual depiction of the environment and its labels is shown in \Cref{fig:maze-grid}.
\begin{figure}[ht]
    \vspace{-1em}
    \centering
    \begin{subfigure}[b]{0.45\textwidth}
        \centering
        \includegraphics[width=\textwidth]{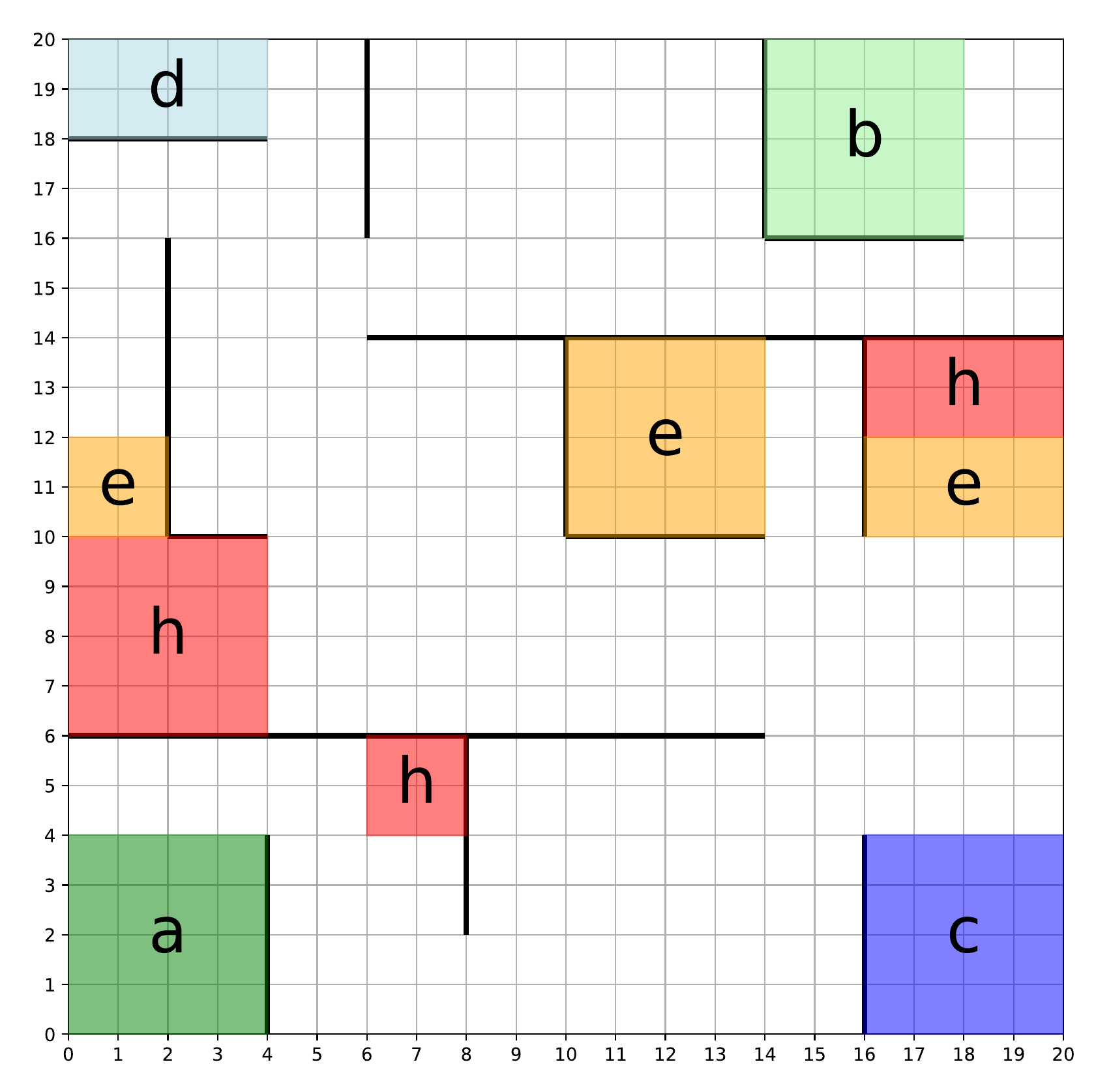}
        \caption{Maze layout.}
        \label{fig:maze-grid}
    \end{subfigure}
    \hfill
    \begin{subfigure}[b]{0.45\textwidth}
        \centering
        \includegraphics[width=\textwidth]{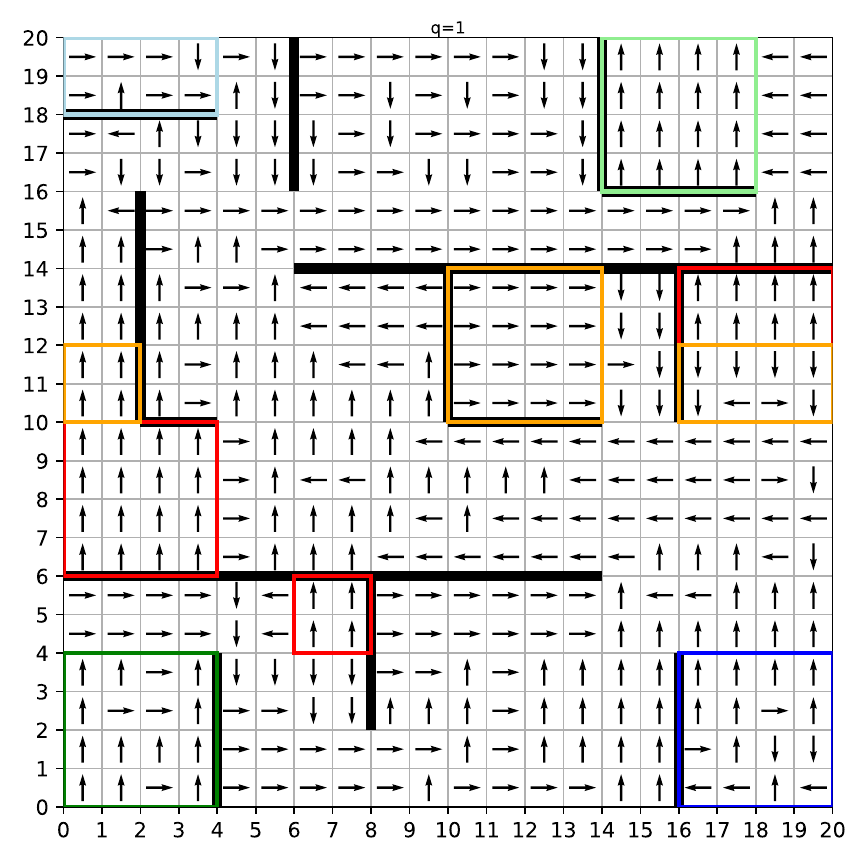}
        \caption{Policy for $\mathsf{F}\ \mathsf{b} \land \mathsf{G}\ \neg\mathsf{h}$.}
        \label{fig:maze-policy}
    \end{subfigure}
    \caption{The slippery maze environment used in our experiments. (\subref{fig:maze-grid}) shows the grid layout with labeled regions and impassable walls. (\subref{fig:maze-policy})  shows a policy that almost surely  reaches region $\mathsf{b}$ while permanently avoiding region $\mathsf{h}$, satisfying the specification $\mathsf{F}\ \mathsf{b} \land \mathsf{G}\ \neg\mathsf{h}$. Arrows indicate the agent's action at each state.}
    \label{fig:grid}
\end{figure}
\paragraph{Task specifications.}
We consider a diverse set of LTL specifications, listed in \Cref{tab:results}, 
covering every class in the Manna-Pnueli hierarchy \cite{mannaHierarchyTemporalProperties1990}. As an example for the rest of this section, consider $\mathsf{F}\ \mathsf{b} \land \mathsf{G}\ \neg\mathsf{h}$, which corresponds to the DSA in \Cref{fig:dsa-fb-gnoth}.
The DSA for each specification is provided in Appendix~\ref{appendix:automaa}.
\begin{figure}[h]
\centering
\begin{tikzpicture}[
    >=stealth,
    node distance=2cm,
    every state/.style={minimum size=0.5cm},
    every edge/.style={draw, ->},
    label/.style={font=\small},
]
\node[state, initial, initial text={}] (q1) {$q_1$};
\node[state, right of=q1] (q0) {$q_0$};
\node[state, right of=q0] (q2) {$q_2$};

\path[->] (q1) edge[loop above] node[above] {$\neg\mathsf{b} \land \neg\mathsf{h}$} (q1);
\path[->] (q0) edge[loop above] node[above] {$\neg\mathsf{h}$} (q0);
\path[->] (q2) edge[loop above] node[above] {$\mathrm{true}$} (q2);

\path[->] (q1) edge node[above] {$\mathsf{b} \land \neg\mathsf{h}$} (q0);

\path[->] (q1) edge[bend right=35] node[below] {$\mathsf{h}$} (q2);
\path[->] (q0) edge node[below] {$\mathsf{h}$} (q2);

\node[right=2em of q2, font=\normalsize] {$\mathrm{Acc} = \{(\{q_1, q_2\}, \{q_0\})\}$};
\end{tikzpicture}
\caption{Deterministic Streett Automaton for $\mathsf{F}\ \mathsf{b} \land \mathsf{G}\ \neg\mathsf{h}$}
\label{fig:dsa-fb-gnoth}
\vspace{-2em}
\end{figure}%
\paragraph{Policies and supporting invariants.}
For each LTL task, we consider two deterministic policies: one that satisfies the task almost surely and one that does not. The former satisfies the assumptions of our theorems, while the latter serves as a negative witness, illustrating that the proof rules fail for non-satisfying policies, as expected. Figure~\ref{fig:maze-policy} shows the satisfying policy used for the task $\mathsf{F}\,\mathsf{b} \land \mathsf{G}\,\neg\mathsf{h}$. For each policy, we take as supporting invariant the set of states reachable from a fixed initial state in the product MDP, computed via reachability analysis.

\begin{wrapfigure}{r}{0.40\textwidth}
    \centering
    \vspace{-2em}
    \includegraphics[width=0.40\textwidth]{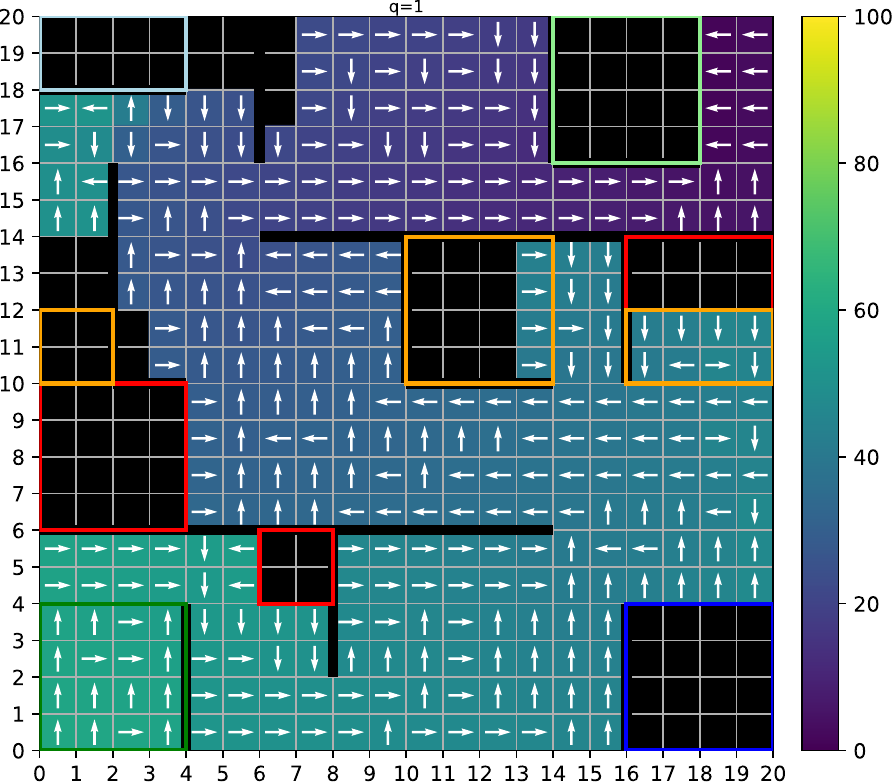}
    \caption{Synthesized proof certificate for $\mathsf{F}\,\mathsf{b} \land \mathsf{G}\,\neg\mathsf{h}$; black cells lie outside the invariant.}
    \label{fig:proof-certificate}
    \vspace{-2em}
\end{wrapfigure}
\paragraph{Proof certificate synthesis.}
For each specification, we construct a Streett automaton with $K$ Streett 
pairs. Each pair induces a reward function according to the design in 
\Cref{thm:v-streett-supermartingale-bscc}. Given a policy $\pi$, we evaluate it under each reward function independently by solving the linear system $(\mathbf{I} - \gamma P_\pi)V^\pi_i = R_i$,
where $R_i \in \mathbb{R}^{|S|}$ is the vector form of the reward function $r_i$ induced by the $i$-th Streett pair. This yields a set of value functions $\{V^\pi_1, \ldots, V^\pi_K\}$. For each Streett pair $i \in \{1, \ldots, K\}$, we then obtain a proof certificate by computing $W_i := \tfrac{1}{1-\gamma} - V^{\pi}_i$, as stated in \Cref{thm:v-streett-supermartingale-bscc}.
We repeat this procedure for the alternative reward design of 
\Cref{thm:v-streett-sm-spec-reward}, yielding a second set of proof certificates. 
We verify the synthesized proof certificates by checking the proof rules of \Cref{def:streett-supermartingale} at every state in the invariant, computing the post-expectation of the corresponding value function and checking whether the required conditions hold.
\Cref{fig:proof-certificate} visualizes a synthesized proof certificate for the task 
\(\mathsf{F}\ \mathsf{b} \land \mathsf{G}\ \neg\mathsf{h}\) over automaton state $q_1$ under the policy shown in \Cref{fig:maze-policy}. 
The values strictly decrease in expectation until the label $\mathsf{b}$ is observed, consistently with condition~\eqref{def:streett-supermartingale:eq-strictly-decreasing}, at which point the automaton transitions to $q_0$.
\paragraph{Validation against PRISM.}
To independently confirm the satisfaction status of each policy, we compute its satisfaction probability using the PRISM model checker~\cite{kwiatkowskaPRISM40Verification2011}, expecting a probability of $1$ for satisfying policies and strictly less than $1$ for non-satisfying ones. As reported in \Cref{tab:results}, the results are consistent with the theoretical guarantees: valid certificates are obtained for satisfying policies and violations are detected for non-satisfying ones.
\begin{table}[h]
\centering
\caption{Experimental results on the grid-world environment. For each specification, we evaluate a satisfying (Sat.) and a non-satisfying (Non-sat.) policy. A certificate is marked valid ($\checkmark$) if the proof rules are satisfied and invalid ($\times$) otherwise. The PRISM columns report the corresponding satisfaction probabilities.
The DSA for each specification is provided in Appendix~\ref{appendix:automaa}.
}
\label{tab:results}
\vspace{0.5em}
\setlength{\tabcolsep}{3pt}
\resizebox{0.90\textwidth}{!}{%
\begin{tabular}{lcccccc}
\toprule
\multirow{2}{*}{\textbf{Specification}} & \multirow{2}{*}{\makecell{\textbf{DSA} \\ \textbf{states}}} & \multirow{2}{*}{\makecell{\textbf{Streett} \\ \textbf{pairs}}} & \multicolumn{2}{c}{\textbf{Certificates valid}} & \multicolumn{2}{c}{\textbf{PRISM}} \\
\cmidrule(lr){4-5} \cmidrule(lr){6-7}
& & & \textbf{Sat.} & \textbf{Non-sat.} & \textbf{Sat.} & \textbf{Non-sat.} \\
\toprule
$\mathsf{F}\ \mathsf{b}$ 
& 2 & 1 & $\checkmark$ & $\times$ & $1.0$ &  0.1760 \\ 
\midrule
$\mathsf{F}\ \mathsf{b} \land \mathsf{G}\ \neg\mathsf{h}$ 
& 3 & 1 & $\checkmark$ & $\times$ & $1.0$ & 0.4532 \\ 
\midrule
$\mathsf{FG}\ \mathsf{e} \land \mathsf{G}\ \neg\mathsf{h}$ 
& 3 & 2 & $\checkmark$ & $\times$ & $1.0$ & 0.1051 \\ 
\midrule
$\mathsf{GF}\ \mathsf{b}$ 
& 2 & 1 & $\checkmark$ & $\times$ & $1.0$ & 0.2169 \\ 
\midrule
$\mathsf{GF}\ \mathsf{a} \land \mathsf{GF}\ \mathsf{b}  \land \mathsf{G}\ \neg \mathsf{h}$ 
& 6 & 1 & $\checkmark$ & $\times$ & $1.0$ & 0.8333 \\ 
\midrule
$\mathsf{GF}\ (\mathsf{a} \lor \mathsf{b}) \to \mathsf{FG}\ \neg \mathsf{e}$ 
& 7 & 2 & $\checkmark$ & $\times$ & $1.0$ &  0.0207 \\ 
\midrule
$\mathsf{F}\ (\mathsf{c} \land \mathsf{F}\ (\mathsf{d} \land \mathsf{GF}\ \mathsf{a}$
& 6 & 1 & $\checkmark$ & $\times$ & $1.0$ &  0.0771 \\ 
\quad\quad $ \land\ \mathsf{GF}\ \mathsf{b})) \land \mathsf{G}\ \neg\mathsf{h}$ 
& & & & & &
\\
\midrule
$\mathsf{GF}\ \mathsf{a} \to \mathsf{GF}\ (\mathsf{b} \lor  \mathsf{c})$ 
&3 & 1 & $\checkmark$ & $\times$ & $1.0$ &  0.0739 \\ 
\bottomrule
\end{tabular}
}
\end{table}

\section{Related Work}
\label{sec:related-work}

\paragraph{Verification of LTL properties in finite MDPs.} The verification of LTL properties in finite Markov Decision
Processes is an extensively studied topic in probabilistic model checking.
A range of efficient and scalable tools \cite{kwiatkowskaPRISM40Verification2011,DBLP:conf/cav/DehnertJK017,%
DBLP:conf/fm/HahnLSTZ14,DBLP:conf/qest/KatoenZHHJ09}
support this task.
%
These approaches typically proceed by translating the LTL formula into an equivalent $\omega$-automaton and taking its product with the MDP.
%
End-component analysis is then performed over the product to identify 
accepting components, from which satisfaction probabilities are computed 
via reachability analysis.
In contrast, our theoretical results are not limited to finite 
state and action spaces, encompassing countably infinite and continuous 
state spaces.

\paragraph{Infinite-state verification via supermartingales.}To enable verification and control of infinite state space stochastic systems, two main
lines of work have emerged in recent years: finite model abstractions via discretisation of 
 the state space \cite{%
 DBLP:conf/cav/AbateGS24,DBLP:conf/qest/DesharnaisLT08,DBLP:journals/siamads/SoudjaniA13,DBLP:conf/hybrid/TkachevA13,DBLP:conf/cav/ZhangSRHH10},
and approaches based on proof certificates   \cite{abateStochasticOmegaRegularVerification2024,%
DBLP:conf/cav/ChakarovS13,DBLP:conf/cav/ChatterjeeGMZ22,%
DBLP:conf/popl/ChatterjeeNZ17,DBLP:journals/automatica/Prajna06,DBLP:journals/toplas/TakisakaOUH21}.
In particular, the latter has seen significant development, where various 
proof rules based on (super)martingale theory have been proposed to establish almost-sure
satisfaction of specifications including reachability \cite{DBLP:conf/cav/ChakarovS13,DBLP:journals/pacmpl/Huang0CG19},
recurrence or persistence \cite{chakarovDeductiveProofsAlmost2016}, 
and more generally full $\omega$-regular properties \cite{abateStochasticOmegaRegularVerification2024}.
Beyond qualitative guarantees, more recently supermartingale-based proof
certificates have been presented for the verification of quantitative
specifications, providing lower bounds on satisfaction probabilities for
reachability \cite{DBLP:conf/tacas/ChatterjeeQSWWZ25,DBLP:conf/popl/ChatterjeeNZ17,DBLP:journals/pacmpl/MajumdarS25},
safety \cite{DBLP:conf/cav/ChatterjeeGMZ22,DBLP:conf/popl/ChatterjeeNZ17,DBLP:conf/cav/ZhiWLOZ24}, reach-avoidance \cite{DBLP:conf/tacas/ChatterjeeHLZ23},
persistence \cite{DBLP:journals/csysl/AjeleyeZ24a} and $\omega$-regular \cite{DBLP:conf/cav/AbateGR25,henzingerSupermartingaleCertificatesQuantitative2025} specifications%
; and, most recently, complete (and $\varepsilon$-complete) certificates for $\omega$-regular properties on countably infinite state spaces~\cite{abate2026completeomegaregularsupermartingalecertificates}.
Some algorithmic methods relying on such proof rules have been developed for
affine systems and certificates exploting Farkas' lemma \cite{DBLP:journals/pacmpl/AgrawalC018,DBLP:conf/cav/ChakarovS13,DBLP:conf/popl/ChatterjeeFNH16,DBLP:conf/popl/ChatterjeeNZ17}
and using Putinar's Positivstellensatz for polynomial programs and certificates \cite{DBLP:conf/cav/ChatterjeeFG16,DBLP:conf/cav/ChatterjeeGMZ22}.
Data-driven methods based on neural network templates have been developed 
for the synthesis of supermartingale certificates for 
termination~\cite{DBLP:conf/cav/AbateGR20}, quantitative 
reachability~\cite{DBLP:conf/concur/AbateEGPR23}, 
stability~\cite{DBLP:conf/aaai/LechnerZCH22}, and 
reach-avoidance~\cite{DBLP:conf/aaai/ZikelicLHC23,DBLP:conf/nips/ZikelicLVCH23,DBLP:conf/cav/BadingsKJJ25,DBLP:conf/ifaamas/DelgrangeAL0N025}; 
however, no data-driven method currently exists for the synthesis of 
supermartingale certificates for $\omega$-regular properties. 
Our theoretical results open a principled route to neural 
supermartingale certificate synthesis for $\omega$-regular properties via RL.


\paragraph{Reinforcement learning for temporal logic specifications.} Reinforcement learning under temporal logic specifications has
emerged as a framework for policy synthesis.
The standard approach translates the specification into an $\omega$-automaton and synchronises it on the fly with the environment during training. The policy is then learned over the product of the MDP and automaton states, with rewards shaped according to its acceptance condition.
Early work used DRA-based
encodings~\cite{sadighLearningBasedApproach2014},
while LDBAs were subsequently adopted as a more RL-friendly automaton
class~\cite{DBLP:journals/corr/abs-1801-08099,DBLP:conf/tacas/HahnPSSTW19,DBLP:conf/icra/Bozkurt0ZP20},
with further work refining reward schemes and establishing
optimality guarantees for model-free
algorithms~\cite{DBLP:conf/concur/KretinskyPR18,hahnFaithfulEffectiveReward2020,hasanbeigLCRLCertifiedPolicy2022,DBLP:conf/ijcai/ShaoK23}.
Extensions to continuous state spaces via deep RL have also been
explored~\cite{DBLP:conf/atal/HasanbeigAK19,DBLP:conf/formats/HasanbeigKA20}.
A complementary line of work enforces specifications at runtime via
shielding~\cite{DBLP:conf/aaai/AlshiekhBEKNT18}, which provides
guarantees for deployed policies but requires a model of the
environment.
Crucially, while reward-shaping approaches guarantee that the optimal
policy maximises the probability of satisfaction, they cannot certify
that a concrete trained policy---which may not be optimal---satisfies
the specification with any given probability, except in finite MDPs
where model checking can be applied post hoc.
In contrast, our work shows that, under an appropriate reward design,
the value function of a satisfying policy encodes a supermartingale
certificate, laying the theoretical groundwork for formal verification
of trained policies beyond finite state spaces.

\section{Towards Certified Reinforcement Learning}
\label{sec:conclusions}

We have established a novel theoretical connection between value functions and 
supermartingale certificates for the verification of LTL properties in 
stochastic systems. Specifically, we have shown that, under an appropriate 
reward design that depends on the Streett pair under analysis, the value function associated to a policy that almost surely satisfies 
an LTL specification encodes a valid Streett supermartingale certificate. These results hold for finite, countably infinite, and continuous 
state spaces and have been validated experimentally on finite MDPs.

\paragraph{On the assumptions in the theorems.}
The supermartingale conditions in \Cref{def:streett-supermartingale} can be verified directly on a given function $W$ — if they hold, the certificate is valid regardless of how $W$ was obtained or whether the assumptions of our theorems are satisfied.
The assumptions of \Cref{thm:v-streett-supermartingale-bscc,thm:v-streett-sm-spec-reward} (uniform hitting time bounds, almost-sure reachability, positive recurrence bound) play a different role, as they are sufficient conditions ensuring that the certificate derived from the value function under our reward design will satisfy these conditions. In practice, one may compute or approximate $V^\pi$ and directly verify the certificate conditions; if the check passes, the guarantee holds even if the theorem assumptions cannot be independently verified.
Both our theorems and the underlying proof rule also require a supporting invariant. The synthesis of such invariants is a standard challenge in supermartingale-based verification of $\omega$-regular properties~\cite{abateStochasticOmegaRegularVerification2024,kuraHierarchySupermartingalesoRegular2025} and lies outside the scope of this work.

\paragraph{Comparing the two reward designs.}
When the policy is obtained using existing RL methods for LTL tasks based on limit-deterministic B\"uchi automata (LDBAs)~\cite{DBLP:conf/formats/HasanbeigKA20,hasanbeigLCRLCertifiedPolicy2022}, where the learned policy solves the automaton's non-determinism, 
the acceptance condition reduces to B\"uchi pairs of the form $(S, T)$, and \Cref{thm:v-streett-supermartingale-bscc} can be applied via \Cref{cor:v-recurrence-supermartingale} without requiring 
knowledge of absorbing sets---only the target set $T$ from the acceptance condition is needed. Outside this setting, computing absorbing sets is generally unrealistic, and
\Cref{thm:v-streett-sm-spec-reward} provides an alternative that requires only knowledge of the specification. 
However, \Cref{thm:v-streett-sm-spec-reward} relies on a state-dependent discount factor, where transitions are discounted only within the rewarded region, and are undiscounted elsewhere. 
Although the Bellman operator under this discount structure is not a standard contraction, it is eventually contractive under the positive recurrence assumption, a setting known as ``eventual discounting''~\cite{stachurskiDynamicProgrammingStatedependent2021}, which has received attention in both dynamic programming and RL~\cite{suttonHordeScalableRealtime2011,yuConvergenceEmphaticTemporalDifference2017}.
The practical consequences for data-driven RL-based certificate synthesis are an open question.

\paragraph{Towards RL-based certificate synthesis.}
Our characterisation opens up a principled route to synthesising supermartingale certificates via reinforcement learning. Given a policy trained to satisfy the specification using
existing RL methods for LTL tasks~\cite{DBLP:conf/formats/HasanbeigKA20,hasanbeigLCRLCertifiedPolicy2022,sadighLearningBasedApproach2014}, one may estimate its value function using standard RL algorithms (e.g., temporal-difference  
learning) and verify the certificate conditions on the result, reducing certificate synthesis to a policy evaluation and a certificate check. 
Our experiments validate the theoretical connection in the exact-computation setting. A natural next step is to extend it to the data-driven setting, where the value function is estimated from sampled trajectories, introducing estimation error whose effect on certificate validity remains to be understood. A further challenge is scaling to continuous and high-dimensional state spaces, where the value function must be approximated using parametric function classes such as neural networks, compounding estimation error with approximation error.
Since our certificate is theoretically guaranteed to be valid under exact computation, verification failure in this setting would reflect errors in the value function approximation. How to diagnose and effectively correct them in practice to obtain a valid certificate remains an open question.

\begin{credits}
\subsubsection{\ackname} 
This research has been supported by EPSRC grant EP/Y028872/1, Mathematical Foundations of Intelligence: An ``Erlangen Programme'' for AI,
and funded by the Advanced Research + Invention Agency (ARIA) through the
project codes MSAI-PR01-P13 and MSAI-SE01-P012.

\end{credits}
%
%
%
\bibliographystyle{splncs04}
\bibliography{bibliography}
\appendix

\newpage
\section{Proofs}
\label{appendix:proofs}

\paragraph{Measurability and integrability.}
Under our MDP definition, the transition kernel \(P\), policy kernel \(\pi\),
and all sets such as \(U,A,B,I\) are measurable. The controlled process
\((S_t)_{t\ge 0}\) is therefore defined on the canonical path space
\((S^\omega,\mathcal{B}(S)^{\otimes\omega})\) under~\(\Pr_\pi\), and for any
measurable \(U\subseteq S\) the first hitting time
\(\tau_U := \inf\{t\ge0 : S_t\in U\}\) 
and the random variable \( S_{\tau_U(S_n)} \)
are measurable. 
Since rewards are bounded and \(\gamma<1\), all discounted
returns are integrable, and every expectation appearing below is well defined.
In the state-dependent case, \(\Gamma(s) = 1\) only at states where \(r = 0\), so undiscounted transitions merely propagate the value without contributing to the sum; the geometric decay \(\gamma < 1\) at all rewarded states ensures integrability is preserved.

\lemmaDecomposeV*
\begin{proof}[\Cref{lem:decompose-v}]
By \Cref{def:val-func}, $V^\pi(s)
=
\mathbb{E}_\pi\!\left[
   \sum_{t=0}^\infty 
   \gamma^t\,\mathbf{1}_{\{S_t \in U\}}
   \,\middle|\, S_0 = s
\right].$
On the event $\{\tau_U = \infty\}$, every indicator $\mathbf{1}_{\{S_t \in U\}}$ 
vanishes and the sum is zero, consistent with the right-hand side since 
$\mathbf{1}_{\{\tau_U < \infty\}}$ vanishes.
On the event $\{\tau_U < \infty\}$, all indicators $\mathbf{1}_{\{S_t \in U\}}$ 
vanish for $t < \tau_U$, and from time $\tau_U$ onward the process starts in 
$S_{\tau_U} \in U$ and accumulates discounted reward equal to $V^\pi(S_{\tau_U})$. 
Hence, the following pathwise identity holds almost surely:
\begin{align}
\sum_{t=0}^\infty \gamma^t \mathbf{1}_{\{S_t \in U\}}
=
\gamma^{\tau_U} 
V^\pi(S_{\tau_U})
\mathbf{1}_{\{\tau_U<\infty\}}.
\end{align}
Taking expectations yields the stated identity. Note that when $s \in U$, 
we have $\tau_U = 0$ and both sides reduce to $V^\pi(s)$, 
so the identity holds for all $s \in S$.
\qed
\end{proof}

\lemmaDeltaFormula*
\begin{proof}[\Cref{lem:delta-formula}]
Fix $s \in S \setminus U$. We invoke \Cref{lem:decompose-v} first to the random
initial state $S_1$ and then to the deterministic initial state $s$,
and subtract the two equalities, taking conditional expectation given $S_0 = s$
throughout.

On the event $\{\tau_U(S_1) < \infty\}$, both hitting times are finite and pathwise
\begin{align}
\label{eq:delta-proof-1}
\Delta := V^\pi(S_1) - V^\pi(s)
=
\gamma^{\tau_U(S_1)}V^\pi(S_{\tau_U(S_1)})
-
\gamma^{\tau_U}V^\pi(S_{\tau_U}).
\end{align}
By assumption $s \notin U$, hence the first visit to $U$ starting from $s$ occurs exactly one
step after the first visit from $S_1$ along the same trajectory. Hence, pathwise,
$\tau_U = 1 + \tau_U(S_1)$ and $S_{\tau_U} = S_{\tau_U(S_1)}$.
Substituting in \eqref{eq:delta-proof-1} gives
\begin{align*}
\Delta
=
\gamma^{\tau_U(S_1)}V^\pi(S_{\tau_U(S_1)})
-
\gamma^{\tau_U(S_1)+1}V^\pi(S_{\tau_U(S_1)})
=
(1-\gamma)\,\gamma^{\tau_U(S_1)}V^\pi(S_{\tau_U(S_1)}).
\end{align*}
On the event $\{\tau_U(S_1) = \infty\}$, $U$ is never reached from $S_1$ and
therefore never from $s$ along the same trajectory. Since, by assumption,
$r(s, a, s') = \mathbf{1}_{\{s \in U\}}$, no reward is ever collected, so both 
$V^\pi(S_1)$ and $V^\pi(s)$ are $0$ and $\Delta = 0$.

Combining the two events yields the pathwise identity
\begin{align}
\Delta
=
(1-\gamma)\,\gamma^{\tau_U(S_1)}V^\pi(S_{\tau_U(S_1)})
\mathbf{1}_{\{\tau_U(S_1)<\infty\}}.
\end{align}
Taking conditional expectation given $S_0 = s$ produces the stated formula.
\qed
\end{proof}

\lemmaVStrictlyIncreasing*
\begin{proof}[\Cref{lem:v-strictly-increasing}]
Fix $s \in I \setminus U$. We invoke \Cref{lem:delta-formula} to obtain
\begin{align}
\Delta(s)
= \mathbb{E}_\pi\!\Big[(1-\gamma)\,\gamma^{\tau_U(S_1)}
   \,V^\pi(S_{\tau_U(S_1)})\,
   \mathbf{1}_{\{\tau_U(S_1)<\infty\}}\;\Big|\;S_0=s\Big].
\end{align}
Since $I$ is absorbing and $s \in I \setminus U$, we have $S_1 \in I$ 
almost surely. By hypothesis, for every $x \in I \setminus U$ the hitting time 
$\tau_U$ is finite with probability one, and for $x \in U$ we have $\tau_U = 0$. 
Hence $\Pr_\pi(\tau_U(S_1) < \infty \mid S_0 = s) = 1$, so the indicator 
$\mathbf{1}_{\{\tau_U(S_1) < \infty\}}$ equals $1$ almost surely.
Since, by assumption, $r(s, a, s') = \mathbf{1}_{\{s \in U\}}$, any $q \in U$ 
collects immediate reward $1$ at every step, so $V^\pi(q) \ge 1$, which gives 
$V^\pi(S_{\tau_U(S_1)}) \ge 1$ on the event $\{\tau_U(S_1) < \infty\}$.
Therefore
\begin{align}
\Delta(s)
\ge \mathbb{E}_\pi\!\Big[(1-\gamma)\,\gamma^{\tau_U(S_1)}\;\Big|\;S_0=s\Big].
\end{align}
With $0 < \gamma < 1$, the integrand $(1-\gamma)\gamma^{\tau_U(S_1)}$ is strictly 
positive almost surely, so the conditional expectation is strictly positive.
Therefore $\Delta(s) > 0$, as required.
\qed
\end{proof}

\lemmaVNonIncreasing*
\begin{proof}[\Cref{lem:v-non-increasing}]
Fix $s \in U$. By assumption, $\Pr_\pi(S_1 \in U \mid S_0 = s) = 1$ for all 
$s \in U$. Applying this inductively, $\Pr_\pi(S_t \in U \mid S_0 = s) = 1$ 
for all $t \geq 0$. Since, by assumption, $r(s, a, s') = 1$ for all $s \in U$, 
$a, s'$, the reward at each step is $1$ almost surely. Then, applying \Cref{def:val-func}, we obtain
\begin{align*}
V^\pi(s) = \sum_{t=0}^\infty \gamma^t = \frac{1}{1-\gamma}.
\end{align*}
Since $S_1 \in U$ almost surely, the same argument applies and yields
$V^\pi(S_1) = \frac{1}{1-\gamma}$ almost surely. Therefore
\begin{align*}
\mathbb{E}_\pi\!\big[V^\pi(S_1)\mid S_0=s\big] - V^\pi(s)
= \frac{1}{1-\gamma} - \frac{1}{1-\gamma}
= 0.
\end{align*}
\qed
\end{proof}

\lemmaVStrictlyIncreasingUniformBound*
\begin{proof}[\Cref{lem:v-strictly-increasing-uniform-bound}]
Fix $s \in F \setminus U$.
We invoke \Cref{lem:delta-formula} to obtain
\begin{align}
\Delta(s)
= 
\mathbb{E}_\pi\!\Big[
   (1-\gamma)\,\gamma^{\tau_U(S_1)}\,
   V^\pi(S_{\tau_U(S_1)})\,
   \mathbf{1}_{\{\tau_U(S_1)<\infty\}}
   \;\Big|\; S_0=s
\Big].
\end{align}
Since, by assumption, $r(s, a, s') = \mathbf{1}_{\{s \in U\}}$, any $q \in U$
collects immediate reward $1$ at every step, so $V^\pi(q) \ge 1$.
Since rewards are nonnegative we also have $V^\pi \ge 0$.
Therefore, on the event $\{\tau_U(S_1) < \infty\}$, we have 
$V^\pi(S_{\tau_U(S_1)}) \ge 1$, and thus
\begin{align}
\label{eq:proof-uniform-bound-1}
\Delta(s)
\ge 
\mathbb{E}_\pi\!\Big[
   (1-\gamma)\,\gamma^{\tau_U(S_1)}\,
   \mathbf{1}_{\{\tau_U(S_1)<\infty\}}
   \;\Big|\;S_0=s
\Big].
\end{align}
Since $s \notin U$, we have $\tau_U = 1 + \tau_U(S_1)$ whenever $\tau_U < \infty$.
By hypothesis, $\mathbb{E}_\pi[\tau_U \mid S_0 = s] \le \bar H$,
which implies $\mathbb{E}_\pi\!\big[\tau_U(S_1) \mid S_0 = s\big] \le \bar H - 1$.
Because the function $h \mapsto \gamma^h$ is convex for $0 < \gamma < 1$,
Jensen's inequality yields
\begin{align}
\label{eq:proof-uniform-bound-2}
\mathbb{E}_\pi\!\big[\gamma^{\tau_U(S_1)} \mid S_0=s\big]
\;\ge\;
\gamma^{\,\mathbb{E}_\pi[\tau_U(S_1)\mid S_0=s]}
\;\ge\;
\gamma^{\bar H - 1}.
\end{align}
Combining \eqref{eq:proof-uniform-bound-1} and \eqref{eq:proof-uniform-bound-2} we obtain
\begin{align}
\Delta(s)
\ge (1-\gamma)\,\gamma^{\bar H - 1}
=: \varepsilon
> 0,
\end{align}
which proves the claim.
\qed
\end{proof}

\lemmaSpecRewardLowerBound*
We need the following definition for the next proof. For a measurable set $U \subseteq S$ and $k \in \mathbb{N}_{\ge 1}$, let
$\tau_{U,k} : S^\omega \to \mathbb{N} \cup \{\infty\}$ denote the time of the $k$-th
visit to $U$, defined for a run $\rho = (s_0, s_1, s_2, \ldots) \in S^\omega$ by
\[
\tau_{U,k}(\rho)
:=
\inf
\left\{
t \in \mathbb{N}
:
\left|
\left\{
i \mid 0 \le i \le t \land s_i \in U
\right\}
\right|
\ge k
\right\}.
\]
\begin{proof}[\Cref{lem:v-streett-sm-spec-reward:lower-bound}]
From the reward definition and discount factor we have
\[
V^{\pi}(s)
=
\mathbb{E}_\pi\!\left[
  \sum_{t=0}^\infty \gamma^{N_t^{(A \cup B)}}\, 
  (\mathit{r_B}\,\mathbf{1}_{\{s\in B\}}
    - \mathit{r_{A\setminus B}}\,\mathbf{1}_{\{s\in A\setminus B\}})
  \,\middle|\, S_0 = s
\right].
\]
By hypothesis
the discount 
accumulates only on steps where reward is nonzero, i.e.\ steps in 
$A \cup B$, so the effective discount at time $t$ is 
$\gamma^{N_t^{(A \cup B)}}$.
Define
\(
E_1 = \left\{\sum_{t=0}^\infty \mathbf{1}_A(S_t) = \infty\right\} \cap \left\{\sum_{t=0}^\infty \mathbf{1}_B(S_t) = \infty\right\}
\)
and
\(
E_2 = \left\{\sum_{t=0}^\infty \mathbf{1}_A(S_t) < \infty\right\}.
\)
Since $I$ is absorbing with $\mu(I)=1$ and
there exists $\bar{H} < \infty$ such that
\(
\forall s \in I:
\mathbb{E}_{\pi}\left[\mathrm{step}^{(A,B)} \mid S_0 = s\right] \leq \bar{H}
\), from the hypothesis
we have that $\Pr_\pi(E_1\cup E_2)=1$.

\paragraph{Case \(E_1\).} Conditional on $E_1$, the process visits \(B\) infinitely often.
Decomposing the return over successive
visits to \(B\), we obtain
\begin{align*}
V^{\pi}(s)
&=
\sum_{i\ge 1}
\mathbb{E}_\pi\!\left[
  -
  r_{A\setminus B}
  \left(
  \sum_{t=\tau_{B,i-1}+1}^{\tau_{B, i}-1}
  \gamma^{N_t^{(A \cup B)}}
  \mathbf{1}_{\{S_t\in A\setminus B\}}
  \right) 
  +
  r_B\,\gamma^{N_{\tau_{B, i}}^{(A \cup B)}}
\right].
\end{align*}
For each $i$, the summand captures the negative return accumulated in 
$A \setminus B$ between the $(i-1)$-th and $i$-th visits to $B$, plus 
the positive return upon reaching $B$.
%
Re-indexing the inner sum by factoring out 
$\gamma^{N_{\tau_{B, i-1}+1}^{(A \cup B)}}$ and using 
$N_{\tau_{B, i}-1}^{(A \cup B)} - N_{\tau_{B, i-1}+1}^{(A \cup B)} 
= \mathrm{step}^{(A,B)}_{\tau_{B, i-1}+1}$
yields\footnote{More precisely, 
$\mathrm{step}^{(A,B)}_{\tau_{B, i-1}+1} 
:= \mathrm{step}^{(A,B)} \circ \theta_{\tau_{B, i-1}+1}$, 
where $\theta_\tau : S^\omega \to S^\omega$ is the shift operator 
applied $\tau$ times, i.e.\ 
$\theta_\tau(s_0, s_1, \ldots) = (s_\tau, s_{\tau+1}, \ldots)$.
}
\begin{align*}
V^{\pi}(s)
&=
\sum_{i\ge 1}
\mathbb{E}_\pi\!\left[
  -
  r_{A\setminus B}
  \left(
  \sum_{k=N_{\tau_{B, i-1}+1}^{(A \cup B)}}^{N_{\tau_{B, i}-1}^{(A \cup B)}}
  \gamma^{k}
  \right)
  +
  r_B\,\gamma^{N_{\tau_{B, i}}^{(A \cup B)}}
\right]
\\
&=
\sum_{i\ge 1}
\mathbb{E}_\pi\!\left[
  -
  r_{A\setminus B}\,
  \gamma^{N_{\tau_{B, i-1}+1}^{(A \cup B)}}
  \left(
  \sum_{k=0}^{\mathrm{step}^{(A,B)}_{{\tau_{B, i-1}+1}}}
  \gamma^{k}
  \right)
  +
  r_B\,\gamma^{N_{\tau_{B, i}}^{(A \cup B)}}
\right].
\end{align*}

Factoring out $\gamma^{N_{\tau_{B, i-1}+1}^{(A \cup B)}}$ and using 
$N_{\tau_{B, i}}^{(A \cup B)} = N_{\tau_{B, i-1}+1}^{(A \cup B)} + \mathrm{step}^{(A,B)}_{\tau_{B, i-1}+1} + 1$,
\begin{align*}
&=
\sum_{i\ge 1}
\mathbb{E}_\pi\!\left[
  \gamma^{N_{\tau_{B, i-1}+1}^{(A \cup B)}}
  \left(
    r_B\,\gamma^{\mathrm{step}^{(A,B)}_{{\tau_{B, i-1}+1}}+1}
    -
    r_{A\setminus B}
    \sum_{k=0}^{\mathrm{step}^{(A,B)}_{{\tau_{B, i-1}+1}}}
    \gamma^{k}
  \right)
\right].
\end{align*}

Using the strong Markov property at \(\tau_{B, i-1}+1\),
\begin{align*}
&=
\sum_{i\ge 1}
\mathbb{E}_\pi\!\left[
  \gamma^{N_{\tau_{B, i-1}+1}^{(A \cup B)}}
\right]
\,
\mathbb{E}_\pi\!\left[
  r_B\,\gamma^{\mathrm{step}^{(A,B)}_{\tau_{B, i-1}+1}+1}
  -
  r_{A\setminus B}
  \sum_{k=0}^{\mathrm{step}^{(A,B)}_{\tau_{B, i-1}+1}}
  \gamma^{k}
\right].
\end{align*}

Evaluating the geometric sum we obtain
\begin{align*}
&=
\sum_{i\ge 1}
\mathbb{E}_\pi\!\left[
  \gamma^{N_{\tau_{B, i-1}+1}^{(A \cup B)}}
\right]
\,
\mathbb{E}_\pi\!\left[
  r_B\,\gamma^{\mathrm{step}^{(A,B)}_{\tau_{B, i-1}+1}+1}
  -
  r_{A\setminus B}
  \frac{1-\gamma^{\mathrm{step}^{(A,B)}_{\tau_{B, i-1}+1}+1}}
       {1-\gamma}
\right].
\end{align*}

By Jensen (applied to the convex map \(m\mapsto\gamma^{m}\)), we may lower-bound the 
inner expectation in terms of the expectation of step:
\begin{align*}
&\geq
\sum_{i\ge 1}
\mathbb{E}_\pi\!\left[
  \gamma^{N_{\tau_{B, i-1}+1}^{(A \cup B)}}
\right]
\,
\left(
r_B\,\gamma^{\mathbb{E}_\pi[\mathrm{step}^{(A,B)}_{\tau_{B, i-1}+1}]+1}
-
r_{A\setminus B}\,
\frac{1-\gamma^{\mathbb{E}_\pi[\mathrm{step}^{(A,B)}_{\tau_{B, i-1}+1}]+1}}
     {1-\gamma}   
\right).
\end{align*}

Note that
\(
\mathbb{E}_{\pi}\left[\mathrm{step}^{(A,B)} \mid S_0 = s\right] \leq \bar{H}
\) by hypothesis.
Applying this yields the lower bound
\begin{align*}
V^{\pi}(s)
&\ge
\sum_{i\ge 1}
\mathbb{E}_\pi\!\left[
  \gamma^{N_{\tau_{B, i-1}+1}^{(A \cup B)}}
\right]
\,
\left(
  r_B\,\gamma^{\bar{H}+1}
  -
  r_{A\setminus B}
  \frac{1-\gamma^{\bar{H}+1}}
       {1-\gamma}
\right).
\end{align*}
By the hypothesis on the reward ratio,
\[
\frac{r_B}{r_{A\setminus B}}
\;\ge\;
\frac{1-\gamma^{\bar{H}+1}}{(1-\gamma)\gamma^{\bar{H}+1}}
\;=\;
\frac{1}{1-\gamma}\!\left(\frac{1}{\gamma^{\bar{H}+1}}-1\right),
\]
which rearranges to
\[
r_B\,\gamma^{\bar{H}+1}
\;-\;
r_{A\setminus B}\,\frac{1-\gamma^{\bar{H}+1}}{1-\gamma}
\;\ge\; 0.
\]
Since $\gamma^{N_{\tau_{B, i-1}+1}^{(A \cup B)}} > 0$ for every $i$, each term in the series is non-negative, and we conclude that on the event $E_1$,
\[
V^\pi(s) \;\ge\; 0.
\]

\paragraph{Case \(E_2\).}

Conditional on $E_2$, the process visits \(A\) only finitely many times.
Let
\[
L := \sup\{t \ge 0 \mid S_t \in A\}
\]
be the (a.s. finite) last visit time to \(A\).
After time \(L\), the process never visits \(A\setminus B\), therefore
all subsequent rewards are nonnegative. Therefore,
\[
V^\pi(s)
\ge
\mathbb{E}_\pi\!\left[
  \sum_{t=0}^{L}
  \gamma^{N_t^{(A \cup B)}}
  \bigl(
    r_B \mathbf{1}_{\{S_t\in B\}}
    -
    r_{A\setminus B}\mathbf{1}_{\{S_t\in A\setminus B\}}
  \bigr)
  \,\middle|\, S_0=s
\right].
\]

Let
\(
\ell := \sup\{t \le L \mid S_t \in B\}
\)
be the last visit to \(B\) before time \(L\)
(with the convention \(\ell=-1\) if no such visit exists).
Split the finite sum at time \(\ell\):
\[
\sum_{t=0}^{\ell}
\gamma^{N_t^{(A \cup B)}}
\bigl(
  r_B \mathbf{1}_{\{S_t\in B\}}
  -
  r_{A\setminus B}\mathbf{1}_{\{S_t\in A\setminus B\}}
\bigr)
+
\sum_{t=\ell+1}^{L}
\gamma^{N_t^{(A \cup B)}}
\bigl(
  r_B \mathbf{1}_{\{S_t\in B\}}
  -
  r_{A\setminus B}\mathbf{1}_{\{S_t\in A\setminus B\}}
\bigr).
\]

The first sum covers the trajectory up to the last visit to $B$, 
which can be decomposed into segments between consecutive visits to $B$.
Applying the same argument as in Case~$E_1$ to each such segment, 
we obtain the lower bound
\(
r_B\,\gamma^{\bar H + 1}
-
r_{A\setminus B}\frac{1-\gamma^{\bar H+1}}{1-\gamma}
\;\ge\; 0
\)
for each segment.
Hence,
\[
\mathbb{E}_\pi 
\left[
\sum_{t=0}^{\ell}
\gamma^{N_t^{(A \cup B)}}
  \bigl(
    r_B \mathbf{1}_{\{S_t\in B\}}
    -
    r_{A\setminus B}\mathbf{1}_{\{S_t\in A\setminus B\}}
  \bigr)
\right]
\;\ge\; 0.
\]

It remains to bound the tail from \(\ell+1\) to \(L\).
Between $\ell+1$ and $L$ there are no visits to $B$ by definition of $\ell$,
so the $r_B$ term vanishes:
\begin{align*}
& \mathbb{E}_\pi
\left[
\sum_{t=\ell+1}^{L}
\gamma^{N_t^{(A \cup B)}}
\bigl(
  r_B \mathbf{1}_{\{S_t\in B\}}
  -
  r_{A\setminus B}\mathbf{1}_{\{S_t\in A\setminus B\}}
\bigr)
\right]
\\
& =
  -
  r_{A\setminus B}\ 
\mathbb{E}_\pi
\left[
\sum_{t=\ell+1}^{L}
\gamma^{N_t^{(A \cup B)}}
\mathbf{1}_{\{S_t\in A\setminus B\}}
\right].
\end{align*}
Between $\ell+1$ and $L$ all visits to $A\cup B$ are visits to $A\setminus B$,
so factoring out $\gamma^{N_\ell^{(A,B)}}$ and re-indexing by the number of
$A\setminus B$-steps after $\ell$:
\begin{align*}
& =
  -
  r_{A\setminus B}\ 
\mathbb{E}_\pi
\left[
\gamma^{N_\ell^{(A \cup B)}}
\sum_{k=0}^{\mathrm{step}_{\ell}^{(A,B)}-1}
\gamma^k
\right].
\end{align*}
Evaluating the geometric sum,
\begin{align*}
& =
  -
  r_{A\setminus B}\ 
\mathbb{E}_\pi
\left[
\gamma^{N_\ell^{(A \cup B)}}
\frac{1-\gamma^{\mathrm{step}_{\ell}^{(A,B)}}}{1-\gamma}
\right].
\end{align*}
By the strong Markov property at $\ell$, $\gamma^{N_\ell^{(A \cup B)}}$ and
$\mathrm{step}^{(A,B)}_\ell$ are independent, so the expectation factors:
\begin{align*}
& =
  -
  r_{A\setminus B}\ 
\mathbb{E}_\pi
\left[
\gamma^{N_\ell^{(A \cup B)}}
\right]
\mathbb{E}_\pi
\left[
\frac{1-\gamma^{\mathrm{step}_{\ell}^{(A,B)}}}{1-\gamma}
\right].
\end{align*}
Note that the expression being bounded carries an overall negative sign, so each subsequent step replaces a factor by an upper bound to obtain a lower bound on the whole.
Since $m\mapsto\gamma^m$ is convex, Jensen's inequality gives
$\mathbb{E}[\gamma^{\mathrm{step}}] \geq \gamma^{\mathbb{E}[\mathrm{step}]}$,
hence $\frac{1-\mathbb{E}[\gamma^{\mathrm{step}}]}{1-\gamma}
\leq \frac{1-\gamma^{\mathbb{E}[\mathrm{step}]}}{1-\gamma}$, and thus obtaining
\begin{align*}
& \;\ge\;
  -
  r_{A\setminus B}\ 
\mathbb{E}_\pi
\left[
\gamma^{N_\ell^{(A \cup B)}}
\right]
\frac{1-\gamma^{\mathbb{E}_\pi[\mathrm{step}_{\ell}^{(A,B)}]}}{1-\gamma}.
\end{align*}
Applying $\mathbb{E}_\pi[\mathrm{step}^{(A,B)}_\ell] \leq \bar{H}$
from hypothesis
and monotonicity of
$m\mapsto\gamma^m$:
\begin{align*}
& \;\ge\;
  -
  r_{A\setminus B}\ 
\mathbb{E}_\pi
\left[
\gamma^{N_\ell^{(A \cup B)}}
\right]
\frac{1-\gamma^{\bar{H}}}{1-\gamma}.
\end{align*}
Finally, using $\mathbb{E}_\pi[\gamma^{N_\ell^{(A \cup B)}}] \leq 1$ since $\gamma\in(0,1)$:
\begin{align*}
& \;\ge\;
  -
  r_{A\setminus B}\ 
\frac{1-\gamma^{\bar{H}}}{1-\gamma}.
\end{align*}

\paragraph{Combining the cases.}
Since the contribution of Case~$E_1$ to $V^{\pi}(s)$ is non-negative and the contribution of Case~$E_2$ is at least $-r_{A\setminus B}\,\frac{1-\gamma^{\bar H}}{1-\gamma}\cdot\Pr_\pi(E_2) \;\geq\; -r_{A\setminus B}\,\frac{1-\gamma^{\bar H}}{1-\gamma}$, we conclude
\[
V^{\pi}(s)
\ge
-\, r_{A\setminus B}\,\frac{1-\gamma^{\bar H}}{1-\gamma}.
\]
\qed
\end{proof}










\newpage
\section{Deterministic Streett Automata for Experimental Validation}
\label{appendix:automaa}
We list the Deterministic Streett Automata for the specifications in \Cref{tab:results}.
\subsection{\textsf{F b}}

\begin{figure}
\begin{center}
\begin{tikzpicture}[shorten >=1pt, node distance=3cm, on grid, auto, initial text=]

\node[state, initial] (q0) {$q_0$};
\node[state, right=of q0] (q1) {$q_1$};

\path[->]
(q0) edge[loop above] node {$\neg\textsf{b}$} ()
(q0) edge node[above] {$\textsf{b}$} (q1)
(q1) edge[loop above] node {$true$} ();

\node at (current bounding box.south) [below=5mm] {
$Acc=\{(\{q_0\},\{q_1\})\}$
};
\end{tikzpicture}
\end{center}
  \caption{DSA for $\textsf{F b}$}
  \label{fig:Fb}
\end{figure}

\subsection{$\textsf{F b} \land \textsf{G }\neg\textsf{h}$}

\begin{figure}[!h]
\begin{center}
\begin{tikzpicture}[shorten >=1pt, node distance=3cm, on grid, auto, initial text=]

\node[state, initial] (q0) {$q_0$};
\node[state, right=of q0, yshift=-15mm] (q1) {$q_1$};
\node[state, right=of q1, yshift=15mm] (q2) {$q_2$};

\path[->]
(q0) edge[loop above] node {$\neg\textsf{b} \land \neg\textsf{h}$} ()
(q2) edge[loop above] node {$true$} ()
(q0) edge[bend left] node[above] {$\textsf{h}$} (q2)
(q1) edge node[below] {$\textsf{h}$} (q2)
(q0) edge node[below, xshift=-4mm] {$\textsf{b} \land \neg\textsf{h}$} (q1)
(q1) edge[loop above] node {$\neg\textsf{h}$} ();

\node at (current bounding box.south) [below=5mm] {
$Acc=\{(\{q_0, q_2\},\{q_1\})\}$
};
\end{tikzpicture}
\end{center}
\caption{DSA for $\textsf{F b}\land\textsf{G }\neg\textsf{h}$}
\label{fig:Fb_and_G_neg_h}
\end{figure}

\newpage

\subsection{$\textsf{FG e} \land \textsf{G }\neg\textsf{h}$}

\begin{figure}[!h]
\begin{center}
\begin{tikzpicture}[shorten >=1pt, node distance=3cm, on grid, auto, initial text=]

\node[state, initial] (q0) {$q_0$};
\node[state, right=of q0, yshift=-15mm] (q1) {$q_1$};
\node[state, right=of q1, yshift=15mm] (q2) {$q_2$};

\path[->]
(q0) edge[loop above] node {$\neg\textsf{e} \land \neg\textsf{h}$} ()
(q2) edge[loop above] node {$true$} ()
(q0) edge[bend left] node[above] {$\textsf{h}$} (q2)
(q1) edge node[below] {$\textsf{h}$} (q2)
(q0) edge[bend left] node[above,yshift=1mm] {$\textsf{e} \land \neg\textsf{h}$} (q1)
(q1) edge[bend left] node[below,yshift=-1mm] {$\neg\textsf{e} \land \neg\textsf{h}$} (q0)
(q1) edge[loop above] node {$\textsf{e} \land \neg\textsf{h}$} ();

\node at (current bounding box.south) [below=5mm] {
$Acc=\{(\{q_0, q_2\},\emptyset), (\{q_0, q_2\}, \{q_1\})\}$
};

\end{tikzpicture}
\end{center}
\caption{DSA for $\textsf{FG e}\land\textsf{G }\neg\textsf{h}$}
\label{fig:FG_e_and_G_neg_h}
\end{figure}

\subsection{$\textsf{GF b}$}

\begin{figure}[!h]
\begin{center}
\begin{tikzpicture}[shorten >=1pt, node distance=3cm, on grid, auto, initial text=]

\node[state, initial] (q0) {$q_0$};
\node[state, right=of q0] (q1) {$q_1$};

\path[->]
(q0) edge[loop above] node {$\textsf{b}$} ()
(q0) edge[bend left] node[above] {$\neg\textsf{b}$} (q1)
(q1) edge[bend left] node[below] {$\textsf{b}$} (q0)
(q1) edge[loop above] node {$\neg\textsf{b}$} ();

\node at (current bounding box.south) [below=5mm] {
$Acc=\{(\{q_1\},\{q_0\})\}$
};
\end{tikzpicture}
\end{center}
\caption{DSA for $\textsf{GF b}$}
\label{fig:GF_a_and_GF_b_and_G_neg_h}
\end{figure}

\newpage

\subsection{$\textsf{GF a} \land \textsf{GF b} \land \textsf{G }\neg\textsf{h}$}

\begin{figure}[!h]
\begin{center}
\begin{tikzpicture}[shorten >=1pt, node distance=4cm, on grid, auto, initial text=,every node/.style={font=\scriptsize}]

\node[state, initial] (q0) {$q_0$};
\node[state, above right=of q0] (q1) {$q_1$};
\node[state, right=of q1] (q2) {$q_2$};
\node[state, below right=of q2] (q3) {$q_3$};
\node[state, below right=of q0] (q5) {$q_5$};
\node[state, right=of q5] (q4) {$q_4$};

\path[->]
  (q0) edge[loop above] node[left] {$\textsf{a} \land \textsf{b} \land \neg\textsf{h}$} ()
  (q2) edge[loop right] node[yshift=-2mm, xshift=-3mm] {$\neg\textsf{a} \land \neg\textsf{h}$} ()
  (q3) edge[loop right] node[right] {$true$} ()
  (q4) edge[loop below] node {$\textsf{a} \land \neg\textsf{b} \land \neg\textsf{h}$} ()
  (q5) edge[loop below] node {$\neg\textsf{a} \land \neg\textsf{b} \land \neg\textsf{h}$} ();

\path[->]
  (q0) edge node[xshift=30mm, yshift=-4mm] {$h$} (q3)
  (q0) edge[bend left] node {$\textsf{a} \land \neg\textsf{b} \land \neg\textsf{h}$} (q1)

  (q1) edge[bend left] node[left, xshift=3mm, yshift=4mm] {$\textsf{a} \land \textsf{b} \land \neg\textsf{h}$} (q0)
  (q1) edge[bend left=20] node {$\neg\textsf{a} \land \textsf{b} \land \neg\textsf{h}$} (q2)
  (q1) edge[bend left=80] node {$\textsf{h}$} (q3)
  (q1) edge node[xshift=-4mm, yshift=5mm] {$\textsf{a} \land \neg\textsf{b} \land \neg\textsf{h}$} (q4)
  (q1) edge node[yshift=-6mm] {$\neg\textsf{a} \land \neg\textsf{b} \land \neg\textsf{h}$} (q5)

  (q2) edge node[below, xshift=18mm, yshift=6mm] {$\textsf{a} \land \textsf{b} \land \neg\textsf{h}$} (q0)
  (q2) edge[bend left=20] node[above] {$\textsf{a} \land \neg\textsf{b} \land \neg\textsf{h}$} (q1)
  (q2) edge node[above] {$\textsf{h}$} (q3)

  (q4) edge node[below, xshift=-17mm, yshift=6mm] {$\textsf{b} \land \neg\textsf{h}$} (q0)
  (q4) edge node[above] {$\textsf{h}$} (q3)
  (q4) edge node[below] {$\neg\textsf{a} \land \neg\textsf{b} \land \neg\textsf{h}$} (q5)

  (q5) edge[bend left] node {$\textsf{a} \land \textsf{b} \land \neg\textsf{h}$} (q0)
  (q5) edge[bend right=50] node[xshift=25mm, yshift=14mm] {$\neg\textsf{a} \land \textsf{b} \land \neg\textsf{h}$} (q2)
  (q5) edge[bend right] node[below] {$\textsf{a} \land \neg\textsf{b} \land \neg\textsf{h}$} (q4)
  (q5) edge node[below, xshift=12mm, yshift=5mm] {$\textsf{h}$} (q3);

\node at (current bounding box.south) [below=5mm] {
$Acc=\{(\{q_2, q_3, q_4, q_5\},\{q_0, q_1\})\}$
};
\end{tikzpicture}
\end{center}
\caption{DSA for $\textsf{GF a}\land\textsf{GF b}\land\textsf{G }\neg\textsf{h}$}
\label{fig:GF_a_GF_b_G_neg_h}
\end{figure}

\newpage

\subsection{$\textsf{GF (a} \lor \textsf{b)} \rightarrow \textsf{FG } \neg \textsf{e}$}

\begin{figure}[!h]
\begin{center}
\begin{tikzpicture}[shorten >=1pt, node distance=4cm, on grid, auto, initial text=,every node/.style={font=\scriptsize}]

\node[state, initial] (q0) {$q_0$};
\node[state, above right=of q0] (q1) {$q_1$};
\node[state, below right=of q0] (q2) {$q_2$};
\node[state, right=of q1] (q6) {$q_6$};
\node[state, right=of q2] (q3) {$q_3$};
\node[state, above right=of q3] (q5) {$q_5$};
\node[state, right=35mm of q0] (q4) {$q_4$};

\path[->]
  (q0) edge[loop above] node[left] {$(\textsf{a}\land\textsf{e}) \lor (\textsf{b}\land\textsf{e})$} ()
  (q4) edge[loop below] node {$\neg \textsf{e}$} ()
  (q5) edge[loop right] node {$\neg \textsf{a} \land \neg \textsf{b}$} ();

\path[->]
  (q1) edge[bend left=20] node[xshift=7mm, yshift=12mm] {$(\textsf{a}\land\textsf{e}) \lor (\textsf{b}\land\textsf{e})$} (q0)
  (q2) edge[bend left=20] node {$(\textsf{a}\land\textsf{e}) \lor (\textsf{b}\land\textsf{e})$} (q0)
  (q3) edge node[xshift=15mm, yshift=-5mm] {$(\textsf{a}\land\textsf{e}) \lor (\textsf{b}\land\textsf{e})$} (q0)
  (q4) edge node[below, xshift=2mm] {$(\textsf{a}\land\textsf{e}) \lor (\textsf{b}\land\textsf{e})$} (q0)
  (q5) edge[bend right=15] node[above, xshift=-5mm] {$(\textsf{a}\land\textsf{e}) \lor (\textsf{b}\land\textsf{e})$} (q0)
  (q6) edge node[xshift=5mm, yshift=3mm] {$(\textsf{a}\land\textsf{e}) \lor (\textsf{b}\land\textsf{e})$} (q0)

  (q0) edge[bend left=20] node {$\neg\textsf{a}\land\neg\textsf{b}\land\textsf{e}$} (q1)

  (q0) edge[bend left=20] node[left] {$\neg\textsf{e}$} (q2)

  (q2) edge node[below] {$\neg\textsf{a}\land\neg\textsf{b}\land\textsf{e}$} (q3)
  (q4) edge node[xshift=-5mm, yshift=2mm] {$\neg\textsf{a}\land\neg\textsf{b}\land\textsf{e}$} (q3)
  (q6) edge[bend left=10] node[yshift=-8mm] {$\neg\textsf{a}\land\neg\textsf{b}\land\textsf{e}$} (q3)

  (q1) edge[bend left=20] node[xshift=10mm, yshift=-6mm] {$\neg \textsf{a} \land \neg \textsf{b}$} (q5)
  (q3) edge[bend right] node[right] {$\neg \textsf{a} \land \neg \textsf{b}$} (q5)

  (q1) edge node {$(\textsf{a}\land\neg\textsf{e}) \lor (\textsf{b}\land\neg\textsf{e})$} (q6)
  (q3) edge[bend left=10] node {$(\textsf{a}\land\neg\textsf{e}) \lor (\textsf{b}\land\neg\textsf{e})$} (q6)
  (q5) edge[bend right] node[right] {$(\textsf{a}\land\neg\textsf{e}) \lor (\textsf{b}\land\neg\textsf{e})$} (q6)
  ;

\node at (current bounding box.south) [below=5mm] {
$Acc=\{(\{q_0,q_3,q_6\}, \emptyset), (\{q_0,q_1,q_2,q_3,q_6\}, \{q_4,q_5\})\}$
};
\end{tikzpicture}
\end{center}
\caption{DSA for $\textsf{GF }(\textsf{a}\lor\textsf{b}) \rightarrow \textsf{FG }\neg\textsf{e}$}
\label{fig:GF_a_or_b_then_FG_neg_e}
\end{figure}

\newpage

\subsection{$\textsf{F }(\textsf{c} \land\textsf{F }(\textsf{d} \land \textsf{GF a} \land \textsf{GF b})) \land \textsf{G }\neg\textsf{h}$}

\begin{figure}[!h]
\begin{center}
\begin{tikzpicture}[shorten >=1pt, node distance=4cm, on grid, auto, initial text=,every node/.style={font=\scriptsize}]

\node[state, initial] (q0) {$q_0$};
\node[state, above right=of q0] (q1) {$q_1$};
\node[state, below right=of q0] (q2) {$q_2$};
\node[state, right=of q1] (q5) {$q_5$};
\node[state, right=of q2] (q4) {$q_4$};
\node[state, below right=of q5] (q3) {$q_3$};

\path[->]
  (q0) edge[loop above] node {$\neg\textsf{c}\land\neg\textsf{h}$} ()
  (q1) edge[loop above] node {$\neg\textsf{d}\land\neg\textsf{h}$} ()
  (q2) edge[loop below] node {$\textsf{a}\land\textsf{b}\land\neg\textsf{h}$} ()
  (q3) edge[loop above] node {$true$} ()
  (q4) edge[loop below] node {$\neg\textsf{b}\land\neg\textsf{h}$} ()
  (q5) edge[loop above] node {$\neg\textsf{a}\land\neg\textsf{h}$} ()
  ;

\path[->]
(q0) edge node {$\textsf{c}\land\neg\textsf{d}\land\neg\textsf{h}$} (q1)

(q0) edge node[left] {$\textsf{c}\land\textsf{d}\land\neg\textsf{h}$} (q2)
(q1) edge node[yshift=12mm] {$\textsf{d}\land\neg\textsf{h}$} (q2)
(q4) edge[bend left=20] node {$\textsf{a}\land\textsf{b}\land\neg\textsf{h}$} (q2)
(q5) edge[bend left=25] node[left] {$\textsf{a}\land\neg\textsf{h}$} (q2)

(q2) edge[bend left=20] node[below] {$\neg\textsf{b}\land\neg\textsf{h}$} (q4)

(q2) edge[bend left=25] node[right] {$\neg\textsf{a}\land\textsf{b}\land\neg\textsf{h}$} (q5)
(q4) edge node[right, yshift=6mm] {$\neg\textsf{a}\land\textsf{b}\land\neg\textsf{h}$} (q5)

(q0) edge node[xshift=-30mm] {$\textsf{h}$} (q3)
(q1) edge[bend left=20] node[xshift=-20mm, yshift=5mm] {$\textsf{h}$} (q3)
(q2) edge node {$\textsf{h}$} (q3)
(q4) edge node {$\textsf{h}$} (q3)
(q5) edge node {$\textsf{h}$} (q3)
;

\node at (current bounding box.south) [below=5mm] {
$Acc=\{(\{q_0,q_1,q_3,q_4,q_5\},\{q_2\})\}$
};
\end{tikzpicture}
\end{center}
\caption{DSA for $\textsf{F }(\textsf{c}\land\textsf{F }(\textsf{d} \land \textsf{GF a} \land \textsf{GF b})) \land \textsf{G }\neg\textsf{h}$}
\label{fig:F_c_and_F_d_and_GF_a_and_GF_b_and_G_neg_h}
\end{figure}

\newpage

\subsection{$\textsf{GF a} \rightarrow \textsf{GF (b} \lor \textsf{c)}$}

\begin{figure}[!h]
\begin{center}
\begin{tikzpicture}[shorten >=1pt, node distance=3cm, on grid, auto, initial text=,every node/.style={font=\small}]

\node[state, initial] (q0) {$q_0$};
\node[state, right=of q0, yshift=20mm] (q1) {$q_1$};
\node[state, right=of q1, yshift=-20mm] (q2) {$q_2$};

\path[->]
(q0) edge[loop above] node {$\textsf{b} \lor \textsf{c}$} ()
(q1) edge[loop above] node {$\textsf{a} \land \neg\textsf{b} \land \neg\textsf{c}$} ()
(q2) edge[loop right] node[right] {$\neg\textsf{a} \land \neg\textsf{b} \land \neg\textsf{c}$} ();

\path[->]
(q0) edge[bend left] node[above, yshift=3mm, xshift=-2mm] {$\textsf{a} \land \neg\textsf{b} \land \neg\textsf{c}$} (q1)
(q1) edge[bend left] node[above, yshift=3mm, xshift=2mm] {$\neg\textsf{a} \land \neg\textsf{b} \land \neg\textsf{c}$} (q2)
(q0) edge node[below] {$\neg\textsf{a} \land \neg\textsf{b} \land \neg\textsf{c}$} (q2)
(q2) edge[bend left] node[below] {$\textsf{b} \lor \textsf{c}$} (q0)
(q1) edge[bend left] node[above, xshift=-1mm, yshift=2mm] {$\textsf{b} \lor \textsf{c}$} (q0)
(q2) edge[bend left] node[above, xshift=5mm, yshift=2mm] {$\textsf{a} \land \neg \textsf{b} \land \neg\textsf{c}$} (q1);

\node at (current bounding box.south) [below=5mm] {
$Acc=\{(\{q_1\},\{q_0\})\}$
};
\end{tikzpicture}
\end{center}
\caption{DSA for $\textsf{GF a}\rightarrow\textsf{GF }(\textsf{b}\lor\textsf{c})$}
\label{fig:GF_a_then_GF_b_or_c}
\end{figure}

\end{document}